\newtheorem{proposition}{Proposition}
\newtheorem{lemma}{Lemma}
\newtheorem{definition}{Definition}
\newcommand{%
     \scalebox{}{\input{}}  
}[2]{%
     \scalebox{#1}{\input{#2}}  
}
\newcommand{\neigh}{N} 
\newcommand{\loss}{\mathrm{loss}}
\newcommand{\Ts}{\mathcal{T}}
\newcommand{\Gs}{\mathcal{G}}
\tikzset{
  annotated cuboid/.pic={
    \tikzset{%
      every edge quotes/.append style={midway, auto},
      /cuboid/.cd,
      #1
    }
    \draw [every edge/.append style={pic actions, densely dashed, opacity=.5}, pic actions]
    (0,0,0) coordinate (o) -- ++(-\cubescale*\cubex,0,0) coordinate (a) -- ++(0,-\cubescale*\cubey,0) coordinate (b) edge coordinate [pos=1] (g) ++(0,0,-\cubescale*\cubez)  -- ++(\cubescale*\cubex,0,0) coordinate (c) -- cycle
    (o) -- ++(0,0,-\cubescale*\cubez) coordinate (d) -- ++(0,-\cubescale*\cubey,0) coordinate (e) edge (g) -- (c) -- cycle
    (o) -- (a) -- ++(0,0,-\cubescale*\cubez) coordinate (f) edge (g) -- (d) -- cycle;
    \path [every edge/.append style={pic actions, |-|}]
    (b) +(0,-5pt) coordinate (b1) 
    (b) +(-5pt,0) coordinate (b2) 
    (c) +(3.5pt,-3.5pt) coordinate (c2) 
    ;
  },
  /cuboid/.search also={/tikz},
  /cuboid/.cd,
  width/.store in=\cubex,
  height/.store in=\cubey,
  depth/.store in=\cubez,
  units/.store in=\cubeunits,
  scale/.store in=\cubescale,
  width=10,
  height=10,
  depth=10,
  units=cm,
  scale=.1,
}
\title{Generalization and Representational Limits of Graph Neural Networks}
\begin{document}

\author{Vikas K. Garg \and Stefanie Jegelka \and Tommi Jaakkola}

\date{CSAIL, MIT}
\maketitle 







\begin{abstract}
\noindent We address two fundamental questions about graph neural networks (GNNs).  First, we prove that several important graph properties cannot be computed by GNNs that rely entirely on local information. Such GNNs include the standard message passing models, and more powerful spatial variants that exploit local graph structure (e.g., via relative orientation of messages, or local port ordering) to distinguish neighbors of each node.  Our treatment includes a novel graph-theoretic formalism. 
Second, we provide the first data dependent generalization bounds for message passing GNNs.  This analysis explicitly accounts for the local permutation invariance of GNNs. Our bounds are much tighter than existing VC-dimension based guarantees for GNNs, and are comparable to Rademacher bounds for recurrent neural networks.
\end{abstract}

\section{Introduction}

Graph neural networks \citep{SGTHM2009, GMS2005}, in their various incarnations, have emerged as models of choice for embedding graph-structured data from a diverse set of domains, including molecular structures, knowledge graphs, biological networks, social networks, and $n$-body problems \citep{DMIBHAA2015, DBV2016, BPLR2016, ZCNC2018, SHBML2018, YJKKK2019, JYBJ2019}. \\

\noindent The working of a graph neural network (GNN) on an input graph, with a feature vector associated with each node, can be outlined as follows. Layer $\ell$ of the GNN updates the embedding of each node $v$ by aggregating the feature vectors, or node and/or edge embeddings, of $v$'s neighbors from layer $\ell-1$ via a non-linear transformation, possibly combining this with $v$'s embedding.
%
The exact form of aggregate and combine steps varies across architectures, and empirical success has been demonstrated for several variants. These include  Graph Convolutional Network (GCN) by \citet{KW2017},   Graph Attention Network (GAT) by \citet{VCCRLB2018}, Graph Isomorphism Network (GIN) by \citet{XHLJ2019}, and GraphSAGE by  \citet{HYL2017}. GNNs are known to have fundamental connections to message passing \citep{DDS2016,GSRVD2017}, the Weisfeiler-Lehman (WL) graph isomorphism test \citep{XHLJ2019, MRFHLRG2019}, and local algorithms \citep{SYK2019, L2019}. \\ 

\noindent In this work, we investigate the representational limitations and generalization properties of GNNs. That is, we examine the performance of GNNs from a learning perspective: (a)~how well they can discriminate graphs that differ in a specified graph property (represented by assigning different labels to such graphs), and (b)~how well they can predict labels, e.g., a graph property, for unseen graphs. 
 Specifically, we focus on classification: (1) a GNN with learnable parameters embeds the nodes of each input graph, (2) the node embeddings are combined into a single graph vector via a {\em readout} function such as sum, average, or element-wise maximum, and (3) a parameterized classifier makes a binary prediction on the resulting graph vector. \\
 
\noindent \textbf{Our contributions.}
(1) First, we show that there exist simple graphs that cannot be distinguished by GNNs  that generate node embeddings solely based on local information.
As a result, these GNNs cannot compute important graph properties such as longest or shortest cycle, diameter, or clique information. This limitation holds for popular models 
such as  GraphSAGE, GCN, GIN, and GAT. Our impossibility results also extend to more powerful variants that provide to each node information about the layout of its neighbors, e.g. via a \emph{port numbering}, like CPNGNN \citep{SYK2019}, or geometric information, like DimeNet \citep{KGG2020}. \\

 
\noindent (2) We introduce a novel graph-theoretic formalism for analyzing CPNGNNs, and our constructions provide insights that may facilitate the design of more effective GNNs. \\
 
\noindent (3) We provide the first data dependent generalization bounds for message passing GNNs. 
Our guarantees are significantly tighter than the VC bounds established by \citet{STH2018} for a class of GNNs. Interestingly, the dependence of our bounds on parameters is comparable to Rademacher bounds for recurrent neural networks (RNNs).
Our results also hold for {\em folding networks} \citep{H2001} that operate on tree-structured inputs. \\

\noindent (4) Our generalization analysis specifically accounts for local permutation invariance of the GNN aggregation function. This relies on a specific sum form that extends to aggregating {\em port-numbered} messages, and therefore opens avenues for analyzing generalization of CPNGNNs.\\

\noindent The rest of the paper is organized as follows. Related work is presented in section \ref{Related}. We provide the necessary background material in section \ref{Preliminaries}. We elucidate the limitations of different GNN variants, introduce our graph-theoretic formalism, and propose a more powerful variant than CPNGNN and DimeNet in section \ref{secLimits}. Finally, we establish generalization bounds for GNNs in section  \ref{secGen}. We outline some key steps of our results in the main text, and defer the details of the proofs to the Supplementary material.     

\section{Related Work} \label{Related}
GNNs continue to generate much interest from both theoretical and practical perspectives. An important theoretical focus has been on understanding the expressivity of existing architectures, and thereby 
introducing richer (invariant) models that can generate more nuanced embeddings. But, much less is known about the generalization ability of GNNs. We briefly review some of these works. \\

\noindent \textbf{Expressivity}. \citet{SGTHM2009} extended the universal approximation property of feed-forward networks (FFNs) \citep{ST1998} to GNNs
using the notion of {\em unfolding equivalence}. Recurrent neural operations for graphs have been introduced with their associated kernel spaces \citep{LJBJ2017}. \citet{DDS2016} performed a sequence of mappings inspired by mean field and  belief propagation procedures from graphical models, and \citet{GSRVD2017} showed that common graph neural net models models may be studied as Message Passing Neural Networks (MPNNs). It is known \citep{XHLJ2019} that GNN variants such as GCNs \citep{KW2017} and GraphSAGE \citep{HYL2017} are no more discriminative than the Weisfeiler-Lehman (WL) test. 
In order to match the power of the WL test, \citet{XHLJ2019} also proposed  GINs.
 Showing GNNs are not powerful enough to represent probabilistic logic inference, \citet{ZCYRLQS2019} 
 introduced {\em ExpressGNN}. Among other works, \citet{BKMPRS2019} proved results  in the context of first order logic, and
\citet{DBY2019} investigated GCNs through the lens of graph moments underscoring the importance of depth compared to width in learning higher order moments. The inability of some graph kernels to distinguish graph properties such as planarity has also been established \citep{KMRS2018, KJM2020}. 
\\ 

 \noindent Spatial, hierarchical, and higher order GNN variants have also been explored. Notably, \citet{SYK2019} exploited a local port ordering of nodes to introduce Consistent Port Numbering GNN (CPNGNN), which they proved to be strictly more powerful than WL. They and \cite{L2019} also established connections to distributed  local algorithms.
 Higher order generalizations have been studied by \citep{MRFHLRG2019, MSRR2019, MFSL2019}; in particular, 
 \citet{Maron2019} introduced models that are more powerful than WL. 
\citet{Hella2015} investigated models weaker than port numbering.  
Several other works exploit spatial information to obtain more nuanced embeddings \citep{YYMRHL2018, YYL2019,  IGBJ2019, KGG2020,CLB2019}. 
\citet{XLTYSKJ2018} learned locally adaptive structure-aware representations by adaptively aggregating information over extended neighborhoods. \citet{VCCRLB2018} introduced GATs that obviate specifying the graph structure in advance. \\

\noindent \textbf{Invariance}. An important consideration in the design of GNNs is their ability to produce output embeddings that are equivariant or permutation-invariant to the input feature vectors. \citet{MBSL2019} constructed
permutation-invariant and equivariant linear layers, and showed that their model can approximate any GNN that can be cast as a MPNN in the framework of \cite{GSRVD2017}. \citet{MSRR2019} constructed new permutation-invariant functions for variable-size inputs, and suggested
some approximations. \citet{MFSL2019, KP2019} proved universality theorems for a specific class of invariant and equivariant networks, respectively.  \\

\noindent \textbf{Generalization}. Several works have established generalization guarantees for FFNs \citep{BFT2017, GRS2018, NBS2018, ZLD2018} and RNNs \citep{CLZ2019, AL2019}. GNNs differ in some key aspects from those models. 
 Unlike RNNs that process sequences, GNNs operate on graph-structured data: sharing of recurrent weights takes place along both the depth and width of a GNN. 
 Unlike FFNs, GNNs deal with irregular local structure. Moreover, at each node, GNNs typically employ permutation-invariant aggregations, in contrast to  global permutation invariance \citep{SGSR2017}. \citet{STH2018} proved VC-dimension bounds for GNNs on a restricted class of graphs that have their label determined by a single designated node. \citet{VZ2019} showed stability bounds for single-layer GCNs in a semi-supervised setting.   

\section{Preliminaries} \label{Preliminaries}
We define the shorthand $[c] = \{1, 2, \ldots, c\}$. For a matrix $W$, we denote its Frobenius norm by $||W||_F$ and spectral norm by $||W||_2$. We also denote the Euclidean norm of a vector $v$ by $||v||_2$. \\

\noindent In a popular class of GNNs, which we call {\em Locally Unordered} GNNs (LU-GNNs), the embedding of each node is updated using messages from its neighbors but without using any spatial information (e.g., the relative orientation of the neighbors). 
This class subsumes variants such as GraphSAGE \citep{HYL2017}, GCN \citep{KW2017}, GIN \citep{XHLJ2019}, and GAT \citep{VCCRLB2018}. 
We can summarize the updated embedding $h_v^{(\ell)}$ for node $v$ at layer $\ell$ in many LU-GNNs by an aggregation and combine operation:
\begin{eqnarray*}\tilde{h}_v^{(\ell-1)} & = & \text{AGG}\{h_u^{(\ell-1)} | u \in \neigh(v)\},\\  h_v^{(\ell)} & = & \text{ COMBINE}\{h_v^{(\ell-1)}, \tilde{h}_v^{(\ell-1)}\}~,
\end{eqnarray*}
where $\neigh(v)$ denotes the set of neighbors of $v$, and functions AGG and COMBINE are sometimes folded into a single aggregation update.  These models are often implemented as MPNNs \citep{DDS2016,GSRVD2017}. One common implementation, called {\em mean field embedding} \citep{DDS2016}, uses the input features $x_v$ of node $v$, in place of $h_v^{(\ell-1)}$ in the COMBINE step above; we will use an instance of this variant for generalization analysis. AGG is typically a permutation-invariant function (e.g., sum). \\ 

\noindent Recently, two subtle variants have been proposed that exploit local structure to treat the neighbors differently.
One of these, CPNGNN \citep{SYK2019}, is based on a {\em consistent port numbering} that numbers the neighbors of each node $v$ from $1 \ldots degree(v)$. Equivalently, a {\em port numbering} (or {\em port ordering}) 
function $p$ associates with each edge $(u, v)$ a pair of numbers $(i, j)$, $i \in [degree(u)]$ and $j \in [degree(v)]$ such that $p(u, i) = (v, j)$, i.e., $u$ is {\em connected} to $v$ via {\em port} $i$. 
%
Thus, $u$ can tell any neighbor from the others based on its ports.
We say $p$ is {\em consistent} if $p(p(u, i)) = (u, i)$ for all $(u, i)$.   Multiple consistent orderings are feasible; CPNGNN arbitrarily fixes one before processing the input graph. 

When computing node embeddings, the 
embedding of node $v$ is updated by processing the information from its neighbors as an \emph{ordered} set, ordered by the port numbering, i.e., the aggregation function is generally not permutation invariant. In addition to a neighbor node $u$'s current embedding, $v$ receives the port number that connects $u$ to $v$.


\noindent Another model, {\em DimeNet} \citep{KGG2020}, is a {\em directional} message passing algorithm introduced in the context of molecular graphs. Specifically, DimeNet embeds atoms via a set of messages (i.e., edge embeddings) and leverages the directional information
by transforming messages based on the angle between them. For each node $v$, the embedding for an incoming message from neighbor $u$ is computed as
\begin{align} \label{equationDime} m_{uv}^{(\ell)} &=  f_1(m_{uv}^{(\ell-1)}, \tilde{m}_{uv}^{(\ell-1)}), ~~~~\quad \text{where}\\ 
\nonumber
 \tilde{m}_{uv}^{(\ell-1)} &= \sum_{w \in \neigh(u)\setminus\{v\}} f_2(m_{wu}^{(\ell-1)}, e^{(uv)}, a^{(wu, uv)}))~,
 \end{align}
and $e^{(uv)}$ is a representation of the distance from $u$ to $v$, $a^{(wu, uv)}$ combines $\angle wuv$ with the distance from $w$ to $u$, and $f_1$ and $f_2$ are update functions 
similar to AGG and COMBINE. The node embedding $h_v^{(\ell)}$ is simply the sum of message embeddings  $m_{uv}^{(\ell)}$.\\

\noindent For a specified graph property $P$ and readout function $f$, we say that a GNN $Q$ {\em decides} $P$, if for any pair of graphs $(G_1, G_2)$ such that $G_1$ and $G_2$ differ on $P$, we have $f(g_Q(G_1)) \neq f(g_Q(G_2))$. Here, $g_Q(G)$ denotes the collection of embeddings of nodes in $G$ when $G$ is provided as input to $Q$. 
We consider several important graph properties in this paper: (a) {\em girth} (length of the shortest cycle), (b) {\em circumference} (length of the longest cycle), 
(c) {\em diameter} (maximum distance, in terms of shortest path, between any pair of nodes in the graph), (d) {\em radius} (minimum node eccentricity, where eccentricity of a node $u$ is defined as the maximum distance from $u$ to other vertices), (e) {\em conjoint cycle} (two cycles that share an edge), (f) {\em total number of cycles}, and (g) {\em $k$-clique} (a subgraph of at least $k \geq 3$ vertices such that each vertex in the subgraph is connected by an edge to any other vertex in the subgraph).


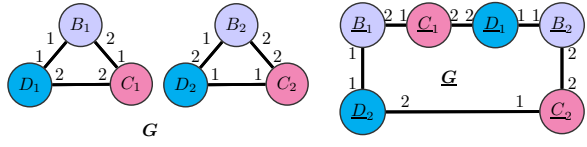
\begin{figure}
\centering
\resizebox{0.47\textwidth}{!}{\begin{tikzpicture}
\tikzstyle{main}=[circle, minimum size = 5mm, thick, draw =black!80, node distance = 6mm]
\tikzstyle{connect}=[-latex, thick]
  \node[main, fill=blue!20] (PB1) {$B_1$};
  \node[main, fill=cyan] (PC1) [below=of PB1, xshift=-1cm, yshift=0.3cm] {$D_1$}; 
  \node[main, fill=magenta!60] (PC2) [below=of PB1, xshift=1cm, yshift=0.3cm] {$C_1$}; 
  \node[main, fill=blue!20] (PB2) [right=of PB1, xshift=1.6cm] {$B_2$};
  \node[main, fill=cyan] (PC3) [below=of PB2, xshift=-1cm, yshift=0.3cm] {$D_2$}; 
  \node[main, fill=magenta!60] (PC4) [below=of PB2, xshift=1cm, yshift=0.3cm] {$C_2$}; 
  

 \draw [ultra thick] (PB1) -- (PC1) node [near start, yshift=0.2cm, xshift=-0.2cm] {1};
 \draw [ultra thick] (PB1) -- (PC1) node [near end, yshift=0.1cm, xshift=-0.2cm] {1};
 \draw [ultra thick] (PB1) -- (PC2) node [near start, yshift=0.2cm, xshift=0.2cm] {2};
 \draw [ultra thick] (PB1) -- (PC2) node [near end, yshift=0.1cm, xshift=0.2cm] {1};
 \draw [ultra thick] (PC1) -- (PC2) node [very near start, yshift=0.2cm] {2};
 \draw [ultra thick] (PC1) -- (PC2) node [very near end, yshift=0.2cm] {2};
 
 \draw [ultra thick] (PB2) -- (PC3) node [near start, yshift=0.2cm, xshift=-0.2cm] {1};
 \draw [ultra thick] (PB2) -- (PC3) node [near end, yshift=0.1cm, xshift=-0.2cm] {2};
 \draw [ultra thick] (PB2) -- (PC4) node [near start, yshift=0.2cm, xshift=0.2cm] {2};
 \draw [ultra thick] (PB2) -- (PC4) node [near end, yshift=0.1cm, xshift=0.2cm] {2};
 \draw [ultra thick] (PC3) -- (PC4) node [very near start, yshift=0.2cm] {1};
 \draw [ultra thick] (PC3) -- (PC4) node [very near end, yshift=0.2cm] {1};
 
 \node[main, fill=blue!20] (PBa) [right=of PB2,  xshift=1cm] {$\underline{B}_1$};
 \node[main, fill=magenta!60] (PCa) [right=of PBa, xshift=-0.2cm] {$\underline{C}_1$};
 \node[main, fill=cyan] (PCb) [right=of PCa, xshift=-0.2cm] {$\underline{D}_1$};
 \node[main, fill=blue!20] (PBb) [right=of PCb, xshift=-0.2cm] {$\underline{B}_2$};
 \node[main, fill=magenta!60] (PCd) [below=of PBb, yshift=-0.2cm] {$\underline{C}_2$};
 \node[main, fill=cyan] (PCc) [below=of PBa, yshift=-0.2cm] {$\underline{D}_2$};
 
 \node[rectangle,draw=white!100, below=of PCc,yshift=2.4cm, xshift=1.7cm] {\bm{$\underline{G}$}};
 
 \draw [ultra thick] (PBa) -- (PCa) node [very near start, yshift=0.2cm] {2};
 \draw [ultra thick] (PBa) -- (PCa) node [very near end, yshift=0.2cm] {1};
 \draw [ultra thick] (PCa) -- (PCb) node [very near start, yshift=0.2cm] {2};
 \draw [ultra thick] (PCa) -- (PCb) node [very near end, yshift=0.2cm] {2};
 \draw [ultra thick] (PCb) -- (PBb) node [very near start, yshift=0.2cm] {1};
 \draw [ultra thick] (PCb) -- (PBb) node [very near end, yshift=0.2cm] {1};
 \draw [ultra thick] (PBb) -- (PCd) node [very near start, xshift=0.2cm] {2};
 \draw [ultra thick] (PBb) -- (PCd) node [very near end, xshift=0.2cm] {2};
 \draw [ultra thick] (PCc) -- (PCd) node [very near start, yshift=0.2cm] {2};
 \draw [ultra thick] (PCc) -- (PCd) node [very near end, yshift=0.2cm] {1};
 \draw [ultra thick] (PCc) -- (PBa) node [very near start, xshift=-0.2cm] {1};
 \draw [ultra thick] (PCc) -- (PBa) node [very near end, xshift=-0.2cm] {1};
\node[rectangle,draw=white!100, below=of PC3, yshift=0.8cm, xshift=-0.7cm] {\bm{$G$}};
\end{tikzpicture}}
\caption{{\bf Construction for Proposition \ref{WLvPN}}. Graph {\boldmath $G$} consists of two triangles that differ in ports (shown next to nodes on each edge) but are otherwise identical, whereas {\boldmath $\underline{G}$} consists of a 6-cycle. LU-GNNs do not use ports,
and each node treats all its  messages equally. Thus, the neighborhood of each node ${X_1}$, where $X \in \{B, C, D\}$ in {\boldmath $G$}, is indistinguishable from that of $\underline{X}_1$ in {\boldmath $\underline{G}$} (so $X_1$ and $\underline{X}_1$ have identical embeddings), and similarly $X_2$ and  $\underline{X}_2$  cannot be told apart. So, LU-GNN with permutation-invariant readout fails to separate {\boldmath $G$} and {\boldmath $\underline{G}$}. In contrast, CPNGNN can exploit that port 2 of $D_2$ connects it to $B_2$, whereas the corresponding node $\underline{D}_2$ connects to $\underline{B}_1$ via port 1.  
\label{fig:LUvS}}
\end{figure}

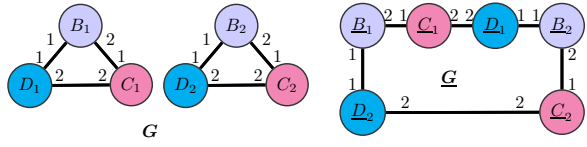
\begin{figure}
\centering
\resizebox{0.47\textwidth}{!}{\begin{tikzpicture}
\tikzstyle{main}=[circle, minimum size = 5mm, thick, draw =black!80, node distance = 6mm]
\tikzstyle{connect}=[-latex, thick]
  \node[main, fill=blue!20] (PB1) {$B_1$};
  \node[main, fill=cyan] (PC1) [below=of PB1, xshift=-1cm, yshift=0.3cm] {$D_1$}; 
  \node[main, fill=magenta!60] (PC2) [below=of PB1, xshift=1cm, yshift=0.3cm] {$C_1$}; 
  
  \node[main, fill=blue!20] (PB2) [right=of PB1, xshift=1.6cm] {$B_2$};
  \node[main, fill=cyan] (PC3) [below=of PB2, xshift=-1cm, yshift=0.3cm] {$D_2$}; 
  \node[main, fill=magenta!60] (PC4) [below=of PB2, xshift=1cm, yshift=0.3cm] {$C_2$}; 
  

 \draw [ultra thick] (PB1) -- (PC1) node [near start, yshift=0.2cm, xshift=-0.2cm] {1};
 \draw [ultra thick] (PB1) -- (PC1) node [near end, yshift=0.1cm, xshift=-0.2cm] {1};
 \draw [ultra thick] (PB1) -- (PC2) node [near start, yshift=0.2cm, xshift=0.2cm] {2};
 \draw [ultra thick] (PB1) -- (PC2) node [near end, yshift=0.1cm, xshift=0.2cm] {1};
 \draw [ultra thick] (PC1) -- (PC2) node [very near start, yshift=0.2cm] {2};
 \draw [ultra thick] (PC1) -- (PC2) node [very near end, yshift=0.2cm] {2};
 
 \draw [ultra thick] (PB2) -- (PC3) node [near start, yshift=0.2cm, xshift=-0.2cm] {1};
 \draw [ultra thick] (PB2) -- (PC3) node [near end, yshift=0.1cm, xshift=-0.2cm] {1};
 \draw [ultra thick] (PB2) -- (PC4) node [near start, yshift=0.2cm, xshift=0.2cm] {2};
 \draw [ultra thick] (PB2) -- (PC4) node [near end, yshift=0.1cm, xshift=0.2cm] {1};
 \draw [ultra thick] (PC3) -- (PC4) node [very near start, yshift=0.2cm] {2};
 \draw [ultra thick] (PC3) -- (PC4) node [very near end, yshift=0.2cm] {2};
 
 
 \node[main, fill=blue!20] (PBa) [right=of PB2,  xshift=1cm] {$\underline{B}_1$};
 \node[main, fill=magenta!60] (PCa) [right=of PBa, xshift=-0.2cm] {$\underline{C}_1$};
 \node[main, fill=cyan] (PCb) [right=of PCa, xshift=-0.2cm] {$\underline{D}_1$};
 \node[main, fill=blue!20] (PBb) [right=of PCb, xshift=-0.2cm] {$\underline{B}_2$};
 \node[main, fill=magenta!60] (PCd) [below=of PBb, yshift=-0.2cm] {$\underline{C}_2$};
 \node[main, fill=cyan] (PCc) [below=of PBa, yshift=-0.2cm] {$\underline{D}_2$};
 
 \node[rectangle,draw=white!100, below=of PCc,yshift=2.4cm, xshift=1.7cm] {\bm{$\underline{G}$}};
 
 \draw [ultra thick] (PBa) -- (PCa) node [very near start, yshift=0.2cm] {2};
 \draw [ultra thick] (PBa) -- (PCa) node [very near end, yshift=0.2cm] {1};
 \draw [ultra thick] (PCa) -- (PCb) node [very near start, yshift=0.2cm] {2};
 \draw [ultra thick] (PCa) -- (PCb) node [very near end, yshift=0.2cm] {2};
 \draw [ultra thick] (PCb) -- (PBb) node [very near start, yshift=0.2cm] {1};
 \draw [ultra thick] (PCb) -- (PBb) node [very near end, yshift=0.2cm] {1};
 \draw [ultra thick] (PBb) -- (PCd) node [very near start, xshift=0.2cm] {2};
 \draw [ultra thick] (PBb) -- (PCd) node [very near end, xshift=0.2cm] {1};
 \draw [ultra thick] (PCc) -- (PCd) node [very near start, yshift=0.2cm] {2};
 \draw [ultra thick] (PCc) -- (PCd) node [very near end, yshift=0.2cm] {2};
 \draw [ultra thick] (PCc) -- (PBa) node [very near start, xshift=-0.2cm] {1};
 \draw [ultra thick] (PCc) -- (PBa) node [very near end, xshift=-0.2cm] {1};
\node[rectangle,draw=white!100, below=of PC3, yshift=0.8cm, xshift=-0.7cm] {\bm{$G$}};
\end{tikzpicture}}
\caption{{\bf Construction for Proposition \ref{PortIssues}}. Graphs {\boldmath $G$} and 
{\boldmath $\underline{G}$} are same as in Fig. \ref{fig:LUvS}, but have been assigned a different consistent numbering. CPNGNN can no longer distinguish the graphs with permutation-invariant readout since each node labeled with $X_1$ in {\boldmath $G$}, where $X \in \{B, C, D\}$ has a corresponding node labeled $\underline{X}_1$ in  {\boldmath $\underline{G}$} with identical features and indistinguishable port-numbered neighborhoods (similarly for $X_2$).  Thus, ordering matters.    
\label{fig:PortDisc}}
\end{figure}

\section{Representation limits of GNNs} \label{secLimits}
We now sketch novel constructions to illustrate the limits of LU-GNNs, CPNGNNs, and DimeNets. First, we show that in some cases, CPNGNNs can be more discriminative than LU-GNNs, depending on the port numbering. Then, we demonstrate that still, LU-GNNs, CPNGNNs, and DimeNets cannot compute certain graph properties.
Our proofs build examples of graphs that (1) differ in important graph properties, but that (2) these models cannot distinguish. As a consequence, these models will not be able to compute such graph properties in general.\\

\noindent To formalize this framework, we introduce a condition of local isomorphism for a pair of graphs. This condition implies that CPNGNNs and LU-GNNs cannot distinguish the two graphs. A similar framework applies to DimeNet. Finally, our insights point to a new GNN variant that leverages additional geometric features to circumvent our constructions for CPNGNNs and DimeNets.


\paragraph{Limitations of LU-GNNs.} 

\begin{proposition} \label{WLvPN}
There exist some graphs that LU-GNNs cannot distinguish, but CPNGNN can distinguish with some consistent port ordering.  
\end{proposition}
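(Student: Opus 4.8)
The plan is to exhibit a single pair of graphs that witnesses both halves of the statement, namely the pair $(G, \underline{G})$ of Figure~\ref{fig:LUvS}: $G$ is a disjoint union of two triangles and $\underline{G}$ is a single $6$-cycle, where in each graph the three input colors (blue, magenta, cyan) each occur exactly twice and every node has degree $2$. I would then argue the two directions separately, relying on the fact that the two graphs are \emph{locally color-regular} in the same way while their \emph{port-labeled} neighborhoods differ.

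For the LU-GNN impossibility, the key observation is that in both $G$ and $\underline{G}$ the multiset of neighbor colors of a node is a fixed function of its own color (blue sees $\{\text{magenta},\text{cyan}\}$, magenta sees $\{\text{blue},\text{cyan}\}$, cyan sees $\{\text{blue},\text{magenta}\}$). I would prove by induction on the layer $\ell$ that every node of a given color receives the same embedding, simultaneously in both graphs. The base case holds since $h_v^{(0)} = x_v$ depends only on color. For the step, since AGG is permutation-invariant it is a function of the multiset $\{h_u^{(\ell-1)} : u \in \neigh(v)\}$; by the induction hypothesis this multiset depends only on the neighbor-color multiset, which depends only on the color of $v$, so $\tilde h_v^{(\ell-1)}$ and hence $h_v^{(\ell)} = \text{COMBINE}\{h_v^{(\ell-1)}, \tilde h_v^{(\ell-1)}\}$ depend only on the color of $v$. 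Consequently, at every layer the multiset of node embeddings of $G$ equals that of $\underline{G}$ (two copies of each color embedding), so any permutation-invariant readout $f$ satisfies $f(g_Q(G)) = f(g_Q(\underline{G}))$ and the graphs are not separated.

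For the CPNGNN direction, I would use the consistent port numbering drawn in Figure~\ref{fig:LUvS} and show it breaks the symmetry after a single round. For each node $v$ define its \emph{port signature} as the list, ordered by $v$'s ports $1, \ldots, \deg(v)$, of pairs (color of the neighbor $u$ at that port, the port at $u$ pointing back to $v$); this is exactly what a CPNGNN layer consumes when $h_u^{(0)} = x_u$. I would tabulate these signatures for all twelve nodes and verify that the multiset of signatures of $G$ differs from that of $\underline{G}$. Concretely, the cyan node $\underline{D}_2$ has signature $[(\text{blue},1),(\text{magenta},1)]$ — its blue neighbor sits at port $1$ — which matches no node of $G$, whose corresponding cyan node $D_2$ instead has signature $[(\text{magenta},1),(\text{blue},1)]$ with the blue neighbor at port $2$. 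Since the set of possible signatures is finite, one can choose the (not necessarily permutation-invariant) CPNGNN aggregation to be injective on signatures, making the layer-$1$ embeddings distinct codes; a histogram/one-hot sum readout is then injective on the resulting embedding multisets, giving $f(g_Q(G)) \neq f(g_Q(\underline{G}))$.

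The routine-but-delicate part is the port bookkeeping: for each edge one must record the ordered pair of ports at its two endpoints, confirm consistency ($p(p(u,i)) = (u,i)$), and read off each neighbor's \emph{return} port rather than $v$'s outgoing port. The one genuinely conceptual step, which I expect to be the main obstacle to state cleanly, is justifying the existence of an injective aggregation together with an injective multiset readout; I would handle it by appealing to the finiteness of the input alphabet and the standard construction of injective multiset functions (mapping each distinct signature to a standard basis vector and summing), mirroring the GIN-style argument.
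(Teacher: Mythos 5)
Your proposal is correct and follows essentially the same route as the paper: the identical pair $(G,\underline{G})$ from Figure~\ref{fig:LUvS}, a color-class induction for the LU-GNN impossibility (the paper phrases this as a node-wise bijection with indistinguishable local views), and the same witness $D_2$ versus $\underline{D}_2$ whose blue neighbor sits at port~$2$ versus port~$1$ for the CPNGNN separation. Your explicit injective-signature aggregation and histogram readout merely fill in a step the paper leaves implicit.
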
 

\begin{figure*}
\begin{subfigure}[b]{0.5\textwidth}
 \resizebox{0.95\textwidth}{!}{\begin{tikzpicture}
\tikzstyle{main}=[circle, minimum size = 5mm, thick, draw =black!80, node distance = 6mm]
\tikzstyle{connect}=[-latex, thick]
  \node[main, fill=red!20] [xshift=2cm] (PA) {$A_1$};
  \node[main, fill=blue!20] (PB) [right=of PA] {$B_1$};
  \node[main, fill=magenta!60] (PC) [below=of PB] {$C_1$};
 \node[main, fill=cyan] (PD) [below=of PA] {$D_1$};
 
 
 \node[main, fill=red!20] [right=of PB] (PA2) {$A_2$};
  \node[main, fill=blue!20] (PB2) [right=of PA2] {$B_2$};
  \node[main, fill=magenta!60] (PC2) [below=of PB2] {$C_2$};
 \node[main, fill=cyan] (PD2) [below=of PA2] {$D_2$};

 \node[main, fill=red!20] (L1) [right=of PB2, xshift=1cm] {$\underline{A}_1$};
  \node[main, fill=blue!20] (L2) [right=of L1] {$\underline{B}_1$};
  \node[main, fill=magenta!60] (L3) [right=of L2] {$\underline{C}_1$};
  \node[main, fill=cyan] (Lt) [right=of L3] {$\underline{D}_1$};
  \node[main,fill=cyan] (O1) [below=of L1] {$\underline{D}_2$};
  \node[main,fill=magenta!60] (O2) [right=of O1,below=of L2] {$\underline{C}_2$};
  \node[main,fill=blue!20] (O3) [right=of O2,below=of L3] {$\underline{B}_2$};
  \node[main,fill=red!20] (Ot) [right=of O3,below=of Lt] {$\underline{A}_2$};

\draw [ultra thick] (PA) -- (PB) node [very near start, yshift=0.2cm] {1};
 \draw [ultra thick] (PA) -- (PB) node [very near end, yshift=0.2cm] {1}; 
 
 \draw [ultra thick] (PB) -- (PC) node [very near start, xshift=0.2cm] {2};
 \draw [ultra thick] (PB) -- (PC) node [very near end, xshift=0.2cm, yshift=0.05cm] {2}; 
 \draw [ultra thick] (PC) -- (PD) node [very near start, yshift=0.2cm] {1};
 \draw [ultra thick] (PC) -- (PD) node [very near end, yshift=0.2cm] {1}; 
 
 \draw [ultra thick] (PD) -- (PA) node [very near start, xshift=-0.2cm, yshift=1pt] {2};
 \draw [ultra thick] (PD) -- (PA) node [very near end, xshift=-0.2cm] {2};
 
 \draw [ultra thick] (PA2) -- (PB2) node [very near start, yshift=0.2cm] {1};
 \draw [ultra thick] (PA2) -- (PB2) node [very near end, yshift=0.2cm] {1}; 
 
 \draw [ultra thick] (PB2) -- (PC2) node [very near start, xshift=0.2cm] {2};
 \draw [ultra thick] (PB2) -- (PC2) node [very near end, xshift=0.2cm, yshift=0.05cm] {2}; 
 \draw [ultra thick] (PC2) -- (PD2) node [very near start, yshift=0.2cm] {1};
 \draw [ultra thick] (PC2) -- (PD2) node [very near end, yshift=0.2cm] {1}; 
 
 \draw [ultra thick] (PD2) -- (PA2) node [very near start, xshift=-0.2cm, yshift=1pt] {2};
 \draw [ultra thick] (PD2) -- (PA2) node [very near end, xshift=-0.2cm] {2};
 
 \draw [ultra thick] (L1) -- (L2) node [very near start, yshift=0.2cm] {1};
 \draw [ultra thick] (L1) -- (L2) node [very near end, yshift=0.2cm] {1};
 
 \draw [ultra thick] (L2) -- (L3) node [very near start, yshift=0.2cm] {2};
 \draw [ultra thick] (L2) -- (L3) node [very near end, yshift=0.2cm] {2};
 
 \draw [ultra thick] (L3) -- (Lt) node [very near start, yshift=0.2cm] {1};
 \draw [ultra thick] (L3) -- (Lt) node [very near end, yshift=0.2cm] {1};
 
 \draw [ultra thick] (O1) -- (O2) node [very near start, yshift=0.2cm] {1};
 \draw [ultra thick] (O1) -- (O2) node [very near end, yshift=0.2cm] {1};
 
 \draw [ultra thick] (O2) -- (O3) node [very near start, yshift=0.2cm] {2};
 \draw [ultra thick] (O2) -- (O3) node [very near end, yshift=0.2cm] {2};
 
 \draw [ultra thick] (O3) -- (Ot) node [very near start, yshift=0.2cm] {1};
 \draw [ultra thick] (O3) -- (Ot) node [very near end, yshift=0.2cm] {1};
 
 \draw [ultra thick] (O1) -- (L1) node [very near start, xshift=-0.2cm, yshift=1pt] {2};
 \draw [ultra thick] (O1) -- (L1) node [very near end, xshift=-0.2cm] {2};
 
 \draw [ultra thick] (Ot) -- (Lt) node [very near start, xshift=0.2cm, yshift=1pt] {2};
 \draw [ultra thick] (Ot) -- (Lt) node [very near end, xshift=0.2cm] {2};

        \foreach \from/\to in {L1/O1, L1/L2, L2/L3, L3/Lt, Lt/Ot, O1/O2, O2/O3, O3/Ot, PA/PB, PB/PC, PC/PD, PD/PA,
        PA2/PB2, PB2/PC2, PC2/PD2, PD2/PA2}
 \draw [ultra thick] (\from) -- (\to); 
         \node[rectangle,draw=white!100, below=of PA, yshift=1cm, xshift=0.7cm] {\bm{$S_4$}};
         \node[rectangle,draw=white!100, below=of PA2, yshift=1cm, xshift=0.7cm] {\bm {$S_4$}};
\node[rectangle,draw=white!100, below=of L1, yshift=1cm, xshift=2cm] {\bm{$S_8$}};
    
\end{tikzpicture}}
\end{subfigure}
~~
\begin{subfigure}[b]{0.5\textwidth}
 \resizebox{0.9\textwidth}{!}{\begin{tikzpicture}
\tikzstyle{main}=[circle, minimum size = 5mm, thick, draw =black!80, node distance = 7mm]
\tikzstyle{connect}=[-latex, thick]
  \node[main, fill=red!20] [xshift=2cm, yshift=1cm] (PA) {$A_1$};
  \node[main, fill=blue!20] (PB) [right=of PA, yshift=-1cm] {$B_1$};
  \node[main, fill=magenta!60] (PC) [left=of PB, yshift=-1cm] {$C_1$};
 \node[main, fill=cyan] (PD) [left=of PA, yshift=-1cm] {$D_1$};
 
 \node[main, fill=red!20] [right=of PA, xshift=2.7cm] (PA2) {$A_2$};
  \node[main, fill=blue!20] (PB2) [right=of PA2, yshift=-1cm] {$B_2$};
  \node[main, fill=magenta!60] (PC2) [left=of PB2, yshift=-1cm] {$C_2$};
 \node[main, fill=cyan] (PD2) [left=of PA2, yshift=-1cm] {$D_2$};

\draw [ultra thick] (PA) -- (PB) node [very near start, xshift=0.2cm, yshift=1pt] {1};
 \draw [ultra thick] (PA) -- (PB) node [very near end, yshift=0.2cm] {1};
 
 \draw [ultra thick] (PA2) -- (PB2) node [very near start, xshift=0.2cm, yshift=1pt] {1};
 \draw [ultra thick] (PA2) -- (PB2) node [very near end, yshift=0.2cm] {1};
 
 \draw [ultra thick] (PB) -- (PC) node [very near start, xshift=-0.2cm, yshift=2pt] {2};
 \draw [ultra thick] (PB) -- (PC) node [very near end, yshift=0.2cm] {2};
 
 \draw [ultra thick] (PB2) -- (PC2) node [very near start, xshift=-0.2cm, yshift=2pt] {2};
 \draw [ultra thick] (PB2) -- (PC2) node [very near end, yshift=0.2cm] {2};
 
 \draw [ultra thick] (PC) -- (PD) node [very near start, xshift=0.1cm, yshift=4pt] {1};
 \draw [ultra thick] (PC) -- (PD) node [very near end, yshift=0.1cm, xshift=0.1cm] {1};
 
 \draw [ultra thick] (PC2) -- (PD2) node [very near start, xshift=0.1cm, yshift=4pt] {1};
 \draw [ultra thick] (PC2) -- (PD2) node [very near end, yshift=0.1cm, xshift=0.1cm] {1};

\draw [ultra thick] (PD) -- (PA) node [very near start, xshift=-0.1cm, yshift=5pt] {2};
 \draw [ultra thick] (PD) -- (PA) node [very near end, xshift=-0.2cm, yshift=0.1cm] {2};
 
 \draw [ultra thick] (PD2) -- (PA2) node [very near start, xshift=-0.1cm, yshift=5pt] {2};
 \draw [ultra thick] (PD2) -- (PA2) node [very near end, xshift=-0.2cm, yshift=0.1cm] {2};
 
 \draw [ultra thick] (PD) -- (PB) node [very near start, xshift=-0.1cm, yshift=5pt] {3};
 \draw [ultra thick] (PD) -- (PB) node [very near end, xshift=-0.05cm, yshift=0.2cm] {3};
 
  \draw [ultra thick] (PD2) -- (PB2) node [very near start, xshift=-0.1cm, yshift=5pt] {3};
 \draw [ultra thick] (PD2) -- (PB2) node [very near end, xshift=-0.05cm, yshift=0.2cm] {3};

   
   \node[main, fill=red!20] [xshift=6cm, yshift=0.5cm, left=of PA2] (RA) {$\underline{A}_1$};
  \node[main, fill=blue!20] (RB) [right=of RA, yshift=-1cm] {$\underline{B}_1$};
  \node[main, fill=magenta!60] (RC) [left=of RB] {$\underline{C}_1$};
 \node[main, fill=cyan] (RD) [left=of RA, yshift=-1cm] {$\underline{D}_1$};

\draw [ultra thick] (RA) -- (RB) node [very near start, xshift=0.2cm, yshift=1pt] {1};
 \draw [ultra thick] (RA) -- (RB) node [very near end, yshift=0.2cm] {1};
 
 \draw [ultra thick] (RC) -- (RD) node [very near start, xshift=-0.1cm, yshift=0.2cm] {1};
 \draw [ultra thick] (RC) -- (RD) node [near end, yshift=0.2cm] {1};
 
 \draw [ultra thick] (RB) -- (RC) node [very near start, xshift=-0.1cm, yshift=0.2cm] {2};
 \draw [ultra thick] (RB) -- (RC) node [very near end, yshift=0.2cm] {2};
 
 \draw [ultra thick] (RD) -- (RA) node [very near start, xshift=-0.1cm, yshift=0.2cm] {2};
 \draw [ultra thick] (RD) -- (RA) node [very near end, yshift=0.2cm, xshift=-0.1cm] {2};

 \foreach \from/\to in {RA/RB, RB/RC, RC/RD, RD/RA}
 \draw [ultra thick] (\from) -- (\to); 
  
 \node[rectangle,draw=white!100, below=of PA, yshift=0.9cm] {\bm{$G_1$}};
 \node[rectangle,draw=white!100, below=of PA2, yshift=0.9cm] {\bm{$G_1$}};

  \node[main, fill=red!20] [below=of RC, yshift=0.5cm] (BA) {$\underline{A}_2$};
  \node[main, fill=cyan] (BD) [right=of BA, yshift=-1cm] {$\underline{D}_2$};
  \node[main, fill=magenta!60] (BC) [left=of BD] {$\underline{C}_2$};
 \node[main, fill=blue!20] (BB) [left=of BA, yshift=-1cm] {$\underline{B}_2$};

\draw [ultra thick] (BA) -- (BD) node [very near start, xshift=0.2cm, yshift=1pt] {1};
 \draw [ultra thick] (BA) -- (BD) node [very near end, yshift=0.2cm] {1};
 
 \draw [ultra thick] (BC) -- (BB) node [very near start, xshift=-0.1cm, yshift=0.2cm] {1};
 \draw [ultra thick] (BC) -- (BB) node [near end, yshift=0.2cm] {1};
 
 \draw [ultra thick] (BD) -- (BC) node [very near start, xshift=-0.1cm, yshift=0.2cm] {2};
 \draw [ultra thick] (BD) -- (BC) node [very near end, yshift=0.2cm] {2};
 
 \draw [ultra thick] (BB) -- (BA) node [very near start, xshift=-0.1cm, yshift=0.2cm] {2};
 \draw [ultra thick] (BB) -- (BA) node [very near end, yshift=0.2cm, xshift=-0.1cm] {2};
 
 \draw [ultra thick] (BB) -- (RD) node [very near start, xshift=-0.2cm] {3};
 \draw [ultra thick] (BB) -- (RD) node [very near end, yshift=-0.1cm, xshift=-0.2cm] {3};
 
 \draw [ultra thick] (BD) -- (RB) node [very near start, xshift=-0.2cm] {3};
 \draw [ultra thick] (BD) -- (RB) node [very near end, yshift=-0.1cm, xshift=-0.2cm] {3};

\node[rectangle,draw=white!100, below=of RD, yshift=1cm, xshift=0.8cm] {\bm{$G_2$}};
  

\end{tikzpicture}}
\end{subfigure}
\caption{{\bf Constructions for Proposition \ref{prop1}}. The graph with two copies of {\boldmath $S_4$} is indistinguishable from {\boldmath $S_8$}  despite having different girth, circumference, diameter, radius, and total number of cycles.
This follows since for each $X \in \{A, B, C, D\}$, nodes $X_1$ and $\underline{X}_1$ have identical feature vectors as well as identical port-ordered neighborhoods (similarly for nodes $X_2$ and $\underline{X}_2$).      
Likewise, the graph with two copies of {\boldmath $G_1$}, each having a conjoint cycle, cannot be distinguished from {\boldmath $G_2$} as the graphs are port-locally isomorphic. A simple modification extends the result to $k$-clique (described in the Supplementary). The constructions hold for LU-GNNs as well (by simply ignoring the port numbers). Note that, in contrast, DimeNet is able to distinguish the graphs in these constructions, e.g., using that $\angle A_1B_1C_1$ is different from the corresponding $\angle \underline{A}_1\underline{B
}_1\underline{C}_1$.
\label{fig:prop3}}
\end{figure*}
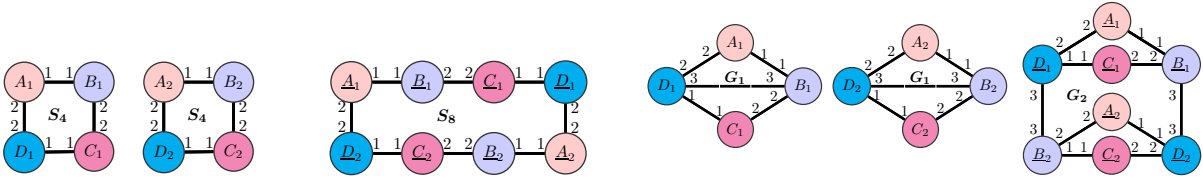

\noindent Fig. \ref{fig:LUvS} shows two graphs, {\boldmath $G$} (consisting of two triangles) and {\boldmath $\underline{G}$}. Nodes with same color (or, equivalently, same uppercase symbol without the subscripts and underline) have identical feature vectors. The port numbers for each node are shown next to the node on the respective edges (note the numbering is consistent). Moreover, for LU-GNNs, edges on nodes with same color have identical edge feature vectors; for CPNGNNs, edge features are the same if, in addition, the local ports for nodes that have the same color are identical. As explained in Fig. \ref{fig:LUvS}, CPNGNN 
can distinguish between the two graphs by exploiting the port information. However, LU-GNNs do not leverage such information, and fail to find distinct representations.\\ 


\paragraph{Limitations of CPNGNNs.} 
Port orderings help CPNGNNs distinguish graphs better. But, port orderings are not unique, and not all orderings distinguish the same set of graphs.
\begin{proposition} \label{PortIssues}
There exist pairs of graphs and consistent port numberings $p$ and $q$ such that CPNGNN can distinguish the graphs with $p$ but not $q$.
\end{proposition}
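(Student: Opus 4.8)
The plan is to reuse the very pair of graphs from Proposition~\ref{WLvPN} --- the two-triangle graph $G$ and the six-cycle $\underline{G}$ of Fig.~\ref{fig:LUvS} --- and to exhibit two consistent port numberings witnessing the claim. For the numbering $p$ I would simply take the one already used in Proposition~\ref{WLvPN}; by that result CPNGNN separates $G$ and $\underline{G}$ under $p$, so nothing new is needed there. The entire content of the proposition therefore lies in producing a second consistent numbering $q$ (the one drawn in Fig.~\ref{fig:PortDisc}) under which CPNGNN provably \emph{fails}, together with a verification that $q$ is indeed consistent.

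The core tool I would isolate is a notion of \emph{port-local isomorphism}: a feature-preserving bijection $\phi$ from the nodes of $G$ to the nodes of $\underline{G}$ such that, for every node $v$ and every port $i \in [degree(v)]$, writing $q(v,i) = (u,j)$ we have $q(\phi(v),i) = (\phi(u),j)$. In words, $\phi$ sends the port-$i$ neighbor of $v$ to the port-$i$ neighbor of $\phi(v)$ while preserving the back-port $j$ and all node and edge features. Under the numbering $q$ of Fig.~\ref{fig:PortDisc}, the bijection $\phi(X_1) = \underline{X}_1$ and $\phi(X_2) = \underline{X}_2$ for $X \in \{B,C,D\}$ is the natural candidate, and I would verify the port-matching condition edge by edge directly from the figure.

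Given such a $\phi$, the key claim is that CPNGNN assigns identical embeddings to $v$ and $\phi(v)$ at every layer, and I would prove this by induction on $\ell$. The base case holds because $h_v^{(0)} = x_v = x_{\phi(v)} = h_{\phi(v)}^{(0)}$ by feature preservation. For the inductive step, recall that CPNGNN updates $h_v^{(\ell)}$ from $h_v^{(\ell-1)}$ together with the port-ordered sequence of pairs $(h_{u_i}^{(\ell-1)}, j_i)$, where $u_i$ is the port-$i$ neighbor of $v$ and $j_i$ is the corresponding back-port. The port-local isomorphism guarantees that this ordered sequence at $v$ equals the one at $\phi(v)$ after substituting $h_{u_i}^{(\ell-1)} = h_{\phi(u_i)}^{(\ell-1)}$ from the inductive hypothesis, so the deterministic update yields $h_v^{(\ell)} = h_{\phi(v)}^{(\ell)}$. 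Since $\phi$ is a bijection, the multiset of final node embeddings is identical for the two graphs, hence any permutation-invariant readout maps $G$ and $\underline{G}$ to the same graph vector and CPNGNN cannot separate them under $q$, whereas it does under $p$.

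The main obstacle I anticipate is the bookkeeping in the middle steps: I must exhibit a numbering $q$ that is simultaneously consistent, i.e. $q(q(v,i)) = (v,i)$ for all $(v,i)$, and compatible with a feature-preserving bijection in the port-matching sense above. These two requirements pull against each other, since the numbering that makes $\underline{G}$ ``look locally like'' $G$ must still close up into an involution on the six-cycle. Verifying both properties amounts to a finite but delicate case check over the few nodes and ports of Fig.~\ref{fig:PortDisc}; once the port-local isomorphism is established, the inductive argument and the readout conclusion are routine.
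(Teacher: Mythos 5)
Your proposal is correct and follows essentially the same route as the paper: reuse the two-triangle graph $G$ and the six-cycle $\underline{G}$ with the numbering $p$ from Proposition~\ref{WLvPN}, then construct a second consistent numbering $q$ (making the two triangles' port assignments identical and copying them around the six-cycle) so that the graphs become port-locally isomorphic, after which the layer-wise induction and permutation-invariant readout give indistinguishability. The inductive argument you spell out inline is exactly the content the paper delegates to Proposition~\ref{Bijection}, so there is no substantive difference.
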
 

\noindent Fig. \ref{fig:PortDisc} shows the same pair of graphs {\boldmath $G$} and {\boldmath $\underline{G}$} but with a different ordering. 
CPNGNN can no longer distinguish the two non-isomorphic graphs with this new ordering. Therefore, 
it may be useful to try multiple random orderings, or even parameterize and learn one along with other GNN parameters. \\

\noindent Henceforth, we assume that an ordering is given with the input graph. We now demonstrate the inability of CPNGNNs to decide several graph properties.  
Toward that goal,
note that in Fig. \ref{fig:LUvS} we conjured an explicit bijection between nodes in {\boldmath $G$} and {\boldmath $\underline{G}$} to reason about permutation-invariant readouts. We now introduce a graph-theoretic  formalism for CPNGNNs that obviates the need for an explicit bijection and is easier to check.  \\

\noindent 
We define a pair of surjective mappings between two graphs in question, 
and impose additional conditions that guarantee the existence of a bijection. This bijection implies that corresponding nodes in the graphs receive identical embeddings, and hence both graphs obtain the same set of node embeddings, making them indistinguishable. \\

\noindent The main idea is that a node $v_1$ in graph $G_1$ is locally indistinguishable from $v_2$ in $G_2$ if (1) the node features agree: $x_{v_1} = x_{v_2}$, and (2) the port-ordered local neighborhoods of $v_1$ and $v_2$ cannot be told apart. That is, if port $i$ of $v_1$ connects to port $k$ of $v$, then a locality preserving bijection connects the nodes corresponding to images of $v_1$ and $v$ via the same ports.
In the notation here, we include the port numbers $(i,j)$ associated with each edge $(u, v)$ in the edge notation, i.e., $((u, i), (v, j))$.


\begin{definition} \label{Def1}
\rm We say that graph $G_1(V_1, E_1, p)$ {\em port-covers} $G_2(V_2, E_2, q)$ if the following conditions are satisfied:
{\bf (a)} there exists a surjection $f : V_1 \mapsto V_2$ such that $x_{v} = x_{f(v)}$ for all $v \in V_1$, 
{\bf (b)} $p$ and $q$ are consistent, and 
{\bf (c)} for all $v_1 \in V_1$ there exists a local bijection $g_{v_1}$ such that for all $i \in [degree(v_1)]$ and $(v, k) = p(v_i, i)$, we have   
$$g_{v_1}(((v_1, i), (v, k))) = (
q(f(v), k), q(f(v_1), i))~,$$
such that $q(f(v), k) = (f(v_1), i)$; $q(f(v_1), i) = (f(v), k)$;
$((v_1, i), (v, k)) \in E_1$; and $(
q(f(v), k), q(f(v_1), i)) \in E_2$.
\noindent Moreover, we say that $G_1(V_1, E_1, p)$ and $G_2 (V_2, E_2, q)$ are {\em port-locally isomorphic} if they both cover each other. 
\end{definition}

\noindent Note that Definition \ref{Def1} does not preclude the possibility that $f$ maps multiple nodes in $G_1$ to the same node in $G_2$, or the other way round. That being the case, the claim that $G_1$ and $G_2$ cannot be distinguished by CPNGNN might not hold. Fortunately, the following result comes to our rescue. 
\begin{proposition} \label{Bijection}
If $G_1 (V_1, E_1, p)$ and $G_2(V_2, E_2, q)$ are port-locally isomorphic, there exists a bijection $h$ that satisfies {\bf (a)}-{\bf(c)} in Definition \ref{Def1} (with $h$ replacing $f$).  As a corollary, CPNGNNs produce identical embeddings for the corresponding nodes in $G_1$ and $G_2$, so CPNGNNs cannot separate $G_1$ and $G_2$ with  permutation-invariant readout.    
\end{proposition}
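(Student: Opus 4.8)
The plan is to observe that ``port-locally isomorphic'' packs in more than it first appears: it supplies two surjections at once, and for finite graphs two opposing surjections already force a bijection, after which conditions \textbf{(a)}--\textbf{(c)} come essentially for free.

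First I would unpack the hypothesis using Definition \ref{Def1}. Saying $G_1$ and $G_2$ are port-locally isomorphic means each port-covers the other. The direction ``$G_1$ port-covers $G_2$'' supplies a surjection $f : V_1 \mapsto V_2$ satisfying \textbf{(a)}--\textbf{(c)}, and the direction ``$G_2$ port-covers $G_1$'' supplies a surjection $f' : V_2 \mapsto V_1$ satisfying the analogous conditions. Since the graphs are finite, surjectivity of $f$ forces $|V_1| \geq |V_2|$ while surjectivity of $f'$ forces $|V_2| \geq |V_1|$, whence $|V_1| = |V_2|$. A surjection between two finite sets of equal cardinality is a bijection, so $f$ is already bijective. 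Taking $h := f$ then gives a bijection satisfying \textbf{(a)}--\textbf{(c)} verbatim (these were exactly the data certifying that $G_1$ port-covers $G_2$), which proves the first assertion. Note that this argument never requires $f'$ to be the inverse of $f$; $f'$ is used only to pin down the cardinalities.

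Next I would establish the corollary by a layerwise induction showing $h_v^{(\ell)} = h_{h(v)}^{(\ell)}$ for every $v \in V_1$ and every layer $\ell$. The base case $\ell = 0$ is condition \textbf{(a)}: the initial embedding is the feature vector and $x_v = x_{h(v)}$. For the inductive step, the crucial consequence of condition \textbf{(c)} is that whenever $p(v_1, i) = (v, k)$ we also have $q(h(v_1), i) = (h(v), k)$; that is, $h$ carries port $i$ of $v_1$ to port $i$ of $h(v_1)$, connecting to the $h$-image of the same neighbor through the same remote port $k$. Because CPNGNN's update at a node is a fixed function of the port-indexed ordered list $\big((h_v^{(\ell-1)}, k) : p(v_1,i)=(v,k),\ i \in [degree(v_1)]\big)$, and the inductive hypothesis gives $h_v^{(\ell-1)} = h_{h(v)}^{(\ell-1)}$ for each neighbor, these lists coincide entry-for-entry at $v_1$ and $h(v_1)$. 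Hence the updates agree and $h_{v_1}^{(\ell)} = h_{h(v_1)}^{(\ell)}$.

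Finally, since $h$ is a bijection with $h_v^{(\ell)} = h_{h(v)}^{(\ell)}$ at every layer, the two graphs produce the same multiset of final node embeddings, so any permutation-invariant readout (sum, mean, or element-wise max) returns identical graph vectors and CPNGNN cannot separate $G_1$ and $G_2$. The hard part will be the inductive step rather than the counting: one must verify that condition \textbf{(c)}---phrased through the auxiliary local bijections $g_{v_1}$---genuinely enforces the ``same local port index, same remote port'' matching that the CPNGNN aggregation relies on, and that the freedom in choosing the $g_{v_1}$ does not break this per-port correspondence. The cardinality-to-bijection reduction, by contrast, is immediate once finiteness is invoked.
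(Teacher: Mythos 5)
Your proposal is correct, and its first half takes a genuinely different and shorter route than the paper. The paper's proof explicitly entertains the case where the surjection $f$ fails to be injective: it identifies equivalence classes $\mathcal{E}_1(v_1)$ and $\mathcal{E}_2(v_2)$ of locally indistinguishable nodes, argues by contradiction (using the reverse surjection) that corresponding classes have equal cardinality, and then rewires $f$ class-by-class into a bijection $h$. You instead observe that the two opposing surjections already force $|V_1| = |V_2|$ for finite graphs, so $f$ is automatically injective and can be taken as $h$ with no modification; this renders the paper's entire non-injective case vacuous in the only setting that matters and is the cleaner argument. What the paper's longer route buys is the side observation that locally indistinguishable nodes organize into equal-sized equivalence classes, which is mildly informative but not needed for the proposition itself. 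Your layerwise induction for the corollary matches the paper's essentially verbatim: the base case comes from condition \textbf{(a)}, the inductive step from the port-preserving correspondence, and permutation invariance of the readout closes the argument. The one point you flag as the remaining "hard part" --- that condition \textbf{(c)} really does force $q(h(v_1), i) = (h(v), k)$ whenever $p(v_1, i) = (v, k)$ --- is written directly into Definition \ref{Def1} (the clause ``$q(f(v_1), i) = (f(v), k)$''), so nothing further needs to be verified there.
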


\noindent We now proceed to establish that CPNGNNs are limited in that they fail to decide important graph properties. We can invoke conditions of Proposition \ref{Bijection}, or define a bijection, to show the following result (see Fig. \ref{fig:prop3} for our constructions). 

\begin{proposition} \label{prop1}
There exist consistent port orderings such that CPNGNNs with permutation-invariant readout cannot decide several important graph properties such as girth,   circumference, diameter, radius, conjoint cycle, total number of cycles, and $k$-clique.
\end{proposition}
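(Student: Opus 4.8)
The plan is to prove Proposition~\ref{prop1} by exhibiting, for each graph property in the list, a pair of graphs $(G_1, G_2)$ that (i) genuinely differ on that property, yet (ii) are \emph{port-locally isomorphic} in the sense of Definition~\ref{Def1}, so that Proposition~\ref{Bijection} immediately forces CPNGNNs with permutation-invariant readout to assign them identical graph-level embeddings. The economy here is that Figure~\ref{fig:prop3} already supplies two reusable constructions: the pair $(\text{two disjoint copies of }S_4,\; S_8)$ handles girth, circumference, diameter, radius, and total number of cycles simultaneously, while the pair $(\text{two copies of }G_1,\; G_2)$ handles conjoint cycle, with a small modification reserved for $k$-clique. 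So I would organize the proof as two main lemmas (one per construction), each establishing port-local isomorphism once and then reading off all the property discrepancies it covers.

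First I would treat the $S_4$/$S_8$ construction. The combinatorial side is routine bookkeeping: two copies of $S_4$ give a graph whose components are $4$-cycles, so its girth and circumference are both $4$, its diameter and radius are those of $C_4$, and it contains exactly two cycles; whereas $S_8$ is a single $8$-cycle, so girth and circumference are $8$, the diameter/radius are those of $C_8$, and it has exactly one cycle. These quantities plainly differ, which settles half the property list \emph{provided} the two graphs are indistinguishable. The substantive step is verifying port-local isomorphism. Here I would define the surjection $f$ (and its reverse, since we need mutual covering) by mapping $X_i \mapsto \underline{X}_i$ for $X \in \{A,B,C,D\}$ and $i \in \{1,2\}$, exactly as indicated in the figure, then check condition~(c) of Definition~\ref{Def1} node by node: for each node and each port $i$, confirm that the port numbering of the neighbor it connects to matches on both sides. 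Because every node has degree $2$ and the feature vectors agree by color, this reduces to a finite port-label audit that the figure has been engineered to pass; invoking Proposition~\ref{Bijection} then yields indistinguishability.

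Next I would handle the $G_1$/$G_2$ construction for the conjoint-cycle property in the same template: two copies of $G_1$ each contain a conjoint cycle (two cycles sharing an edge, via the degree-$3$ nodes $D_1,B_1$), whereas $G_2$ is built so that no edge is shared by two cycles, establishing the property difference; then I verify port-local isomorphism through the bijection $X_i \mapsto \underline{X}_i$ and the port audit as before. For $k$-clique I would describe (deferring the full gadget to the Supplementary, as the statement permits) a local modification of one of these constructions that changes the maximum clique size between the two graphs while preserving the port-local-isomorphism certificate — typically by attaching identical clique gadgets in a port-consistent way so that local neighborhoods remain indistinguishable even though the global clique number differs.

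The main obstacle I anticipate is condition~(c) of Definition~\ref{Def1}, not the graph-property computations. Permutation-invariant readout only guarantees indistinguishability when the \emph{ordered, port-labeled} neighborhoods match, so the delicate part is ensuring that the chosen consistent port numberings $p$ on $G_1$ and $q$ on $G_2$ make the local bijection $g_{v_1}$ commute with $f$ in the precise way Definition~\ref{Def1} demands — that is, that $p(v_1,i) = (v,k)$ forces $q(f(v_1),i) = (f(v),k)$ and the consistency relations $q(f(v),k) = (f(v_1),i)$ hold. Since port numberings are not unique (cf.\ Proposition~\ref{PortIssues}), I must not merely claim \emph{some} numbering works but exhibit the specific numbering drawn in Figure~\ref{fig:prop3} and check it is globally consistent; the verification is finite but must be done carefully for the degree-$3$ nodes in the $G_1/G_2$ case, where the third port creates the only real opportunity for a mismatch. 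Once that audit passes, everything else follows mechanically from Proposition~\ref{Bijection}.
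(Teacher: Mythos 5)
Your proposal matches the paper's proof essentially step for step: the same two constructions from Fig.~\ref{fig:prop3} (two disjoint copies of $S_4$ versus $S_8$ for girth, circumference, diameter, radius, and cycle count; two copies of $G_1$ versus $G_2$ for the conjoint cycle, with the $k$-clique case obtained by adding edges $A_1C_1$, $A_2C_2$, $\underline{A}_1\underline{C}_1$, $\underline{A}_2\underline{C}_2$ on a fresh port $3$), certified by port-local isomorphism and Proposition~\ref{Bijection}. The only small bookkeeping difference is that the paper takes the diameter and radius of the disconnected two-copies-of-$S_4$ graph to be $\infty$ rather than the values of a single $C_4$, but the properties differ from those of $S_8$ under either convention.
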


\noindent Clearly, these impossibility results apply to LU-GNNs as well (see Fig. \ref{fig:prop3}). However, as described in Fig. \ref{fig:prop3},  our constructions for CPNGNNs do not work for DimeNets. This immediately leads us to the question whether DimeNets are expressive enough to decide the graph properties. 

\paragraph{Limitations of DimeNets.} Unfortunately, as we show in Fig. \ref{fig:Dime}, it turns out we can craft another construction on two graphs that differ in several of these properties but cannot be distinguished by DimeNets.     
\begin{proposition} \label{propDime1}
DimeNet with  permutation-invariant readout cannot decide graph properties such as girth, circumference, diameter,  radius, or total number of cycles.   
\end{proposition}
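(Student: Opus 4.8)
The plan is to mimic the strategy that worked for LU-GNNs and CPNGNNs, but now adapt the construction so that the angular and distance information that DimeNet exploits also fails to separate the two graphs. Recall that in Fig. \ref{fig:prop3} DimeNet succeeded precisely because angles such as $\angle A_1B_1C_1$ differed from their counterparts. So the first step is to design a pair of graphs $G$ and $\underline{G}$ that (1) differ in one of the target properties (girth, circumference, diameter, radius, or total number of cycles), yet (2) are built from identical local geometric building blocks, so that every node has a directional neighborhood that is congruent (up to rigid motion) to that of a corresponding node in the other graph. A natural candidate is to take a geometric analogue of the ``two small cycles versus one large cycle'' idea: embed two congruent polygons whose union realizes a short girth/small diameter, versus a single larger polygon (or a differently glued arrangement) with the same edge lengths and the same interior angles at corresponding vertices, so the two graphs disagree on, say, circumference and total number of cycles.

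Once the graphs are fixed, I would establish a \emph{geometric} analogue of Definition \ref{Def1}: a bijection $h$ between the edge-messages of $G$ and $\underline{G}$ that preserves not only node features and adjacency but also the two quantities DimeNet reads off each incoming message, namely the distance representation $e^{(uv)}$ and the combined angle-distance term $a^{(wu,uv)}$ (which encodes $\angle wuv$ together with $\|w-u\|$). Concretely, for every directed edge $(u,v)$ in $G$ mapped to $(h(u),h(v))$ in $\underline{G}$, I would require $e^{(uv)}=e^{(h(u)h(v))}$ and, for each $w\in \neigh(u)\setminus\{v\}$, a matching neighbor $w'\in \neigh(h(u))\setminus\{h(v)\}$ with $a^{(wu,uv)}=a^{(w'h(u),h(u)h(v))}$. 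The second step is then a straightforward induction on the layer index $\ell$: if the incoming messages $m_{wu}^{(\ell-1)}$ agree across the bijection, then by \eqref{equationDime} the aggregated $\tilde m_{uv}^{(\ell-1)}$ agree (the sum in \eqref{equationDime} is over the matched neighbor sets, and each summand $f_2(\cdot)$ sees identical arguments), hence $m_{uv}^{(\ell)}$ agree; the base case $\ell=0$ is the feature/geometry matching just described. Summing messages to form $h_v^{(\ell)}$ then gives corresponding nodes identical node embeddings, and a permutation-invariant readout collapses the two identical multisets of embeddings to the same graph vector, so DimeNet cannot separate $G$ and $\underline{G}$.

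The main obstacle is the \emph{geometric consistency} of the construction: unlike the purely combinatorial port case, the distances and angles must be mutually realizable by an actual embedding of each graph in the plane (or $\mathbb{R}^3$), and simultaneously the two graphs must still differ on a genuine graph-theoretic property. These two demands pull against each other, since making all local angle-distance signatures match strongly constrains the global shape, which is exactly what one would expect to force the graphs to be isomorphic. The trick will be to exploit that $a^{(wu,uv)}$ and $e^{(uv)}$ are purely \emph{local} (one-hop) quantities, so two globally different gluings of congruent local pieces can share every local signature while differing in circumference, girth, diameter, radius, or cycle count. I would verify realizability by explicitly giving coordinates for both graphs, checking that corresponding vertices carry congruent stars of edges, and then confirming the chosen property differs by direct inspection. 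The inductive message-equality argument and the readout collapse are routine once the geometric matching is in place; essentially all the difficulty is in producing the figure (Fig. \ref{fig:Dime}) and checking its local-congruence-but-global-difference property.
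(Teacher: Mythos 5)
Your high-level strategy matches the paper's: reuse the ``two short cycles versus one long cycle'' construction from Proposition \ref{prop1}, arrange the geometry so that every node's one-hop distance and angle signature is identical across the two graphs, and then run the layer-wise induction on message equality followed by the permutation-invariant readout collapse. The induction and readout steps you describe are indeed routine and correct. However, for an existence claim the proof \emph{is} the witness, and that is exactly the part you leave unresolved: you correctly flag that ``essentially all the difficulty is in producing the figure,'' but you do not produce it, and your ``natural candidate'' --- a single larger planar polygon with the same edge lengths and the same angles at corresponding vertices as the small polygons --- runs into an obstruction. If the squares $S_4$ force every edge-to-edge angle to be $\pi/2$ and every edge length to be equal, then the $8$-cycle must be an equilateral rectilinear closed walk, and a simple planar one with $8$ edges does not exist (the turning-number and displacement constraints cannot be satisfied without self-intersection). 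So the construction cannot be carried out in the plane as sketched.

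The paper's resolution is to leave the plane: it overlays $S_4$ and $S_8$ on a cube in $\mathbb{R}^3$ (Fig.~\ref{fig:Dime}), taking the two $4$-cycles $G_3$ to be opposite faces of the cube and the $8$-cycle $G_4$ to wind around the cube so that every edge is a unit cube edge and every angle $\angle wuv$ is a right angle. This makes all local distances and angles literally identical across corresponding nodes while preserving the combinatorial differences in girth, circumference, diameter, radius, and cycle count, and it has the additional payoff (noted in the paper) that even augmenting DimeNet with the port numbering of $S_4$ and $S_8$ does not help, since the underlying port-local isomorphism from Proposition \ref{prop1} is inherited. To complete your argument you would need to supply this (or an equivalent) explicit three-dimensional realization; without it, the claim that locally congruent but globally different gluings exist remains an assertion rather than a proof.
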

\noindent In fact, as we argue in Fig. \ref{fig:Dime}, augmenting DimeNet with port-numbering would still not be sufficient. Therefore, a natural question that arises is whether we can obtain a more expressive model than both CPNGNN and DimeNet. 
Leveraging insights from our constructions, we now introduce one such variant, {\em H-DCPN} (short for {\em Hierarchical Directional Message Passing Consistent Port Numbering Networks}), that generalizes both CPNGNN and DimeNet. 

\paragraph{More powerful GNNs.} 
 The main idea is to augment DimeNet not just with port ordering, but also additional spatial information. Observe that the construction in Fig.~\ref{fig:Dime} will fail if for each edge $(u,v)$, we additionally model the set of angles $\alpha_{wuvz}$ between planes $\mathcal{P}(w, u, v)$ and $\mathcal{P}(u, v, z)$ due to neighbors $w$ of $u$ and neighbors $z$ of $v$. Similarly, we could use the distances between these planes. We denote by $\Phi_{uv}$ all such features due to these planes. Denote by $m_{uv}^{(\ell)}$ the message from neighbor $u$ of $v$ at time $\ell$, and by $\underline{m}_{uv}^{(\ell)} = \underline{f}(m_{uv}^{(\ell)},\Phi_{uv})$ a refined message that encapsulates the effect of geometric features.  \\

\noindent We incorporate salient aspects of CPNGNN as well. Specifically, we first fix a consistent port numbering, as in CPNGNN.  Denote the degree of $v$ by $d(v)$. Let $c_v(j)$ be the neighbor of $v$ that connects to port $j$ of $v$ via port $t_{j, v}$, 
for $j \in [d(v)]$.  
We suggest to update the embedding of $v$ as
\begin{equation*} \label{H-DCPN}
h_v^{(\ell)} = f(h_v^{(\ell-1)}, \underline{m}_{c_v(1)v}^{(\ell-1)}, t_{1,v}, \ldots, \underline{m}_{c_v(d(v))v}^{(\ell-1)}, t_{d(v),v})~, \end{equation*}
where $f$ can potentially take into account the ordering of its arguments. The update resembles CPNGNN when we define $m_{uv}^{(\ell)} = h_u^{(\ell)}$; and DimeNet when $f$ ignores $h_v^{(\ell-1)}$ (and ports) and we  define $m_{uv}^{(\ell)}$ using \eqref{equationDime} in section \ref{Preliminaries}. H-DCPN derives its additional discriminative power from the features $\Phi_{uv}$ encoded in messages $\underline{m}_{uv}^{(\ell)}$. For instance, the nodes labeled   
$A_1$, $B_1$, $C_1$, $D_1$ lie on the same plane in {\boldmath ${G}_3$}. In contrast, the plane defined by nodes with labels $\underline{A}_1$, $\underline{B}_1$, $\underline{C}_1$ in {\boldmath $G_4$} is orthogonal to that defined by nodes with labels $\underline{D}_2, \underline{A}_1, \underline{B}_1$; thus allowing H-DCPN to distinguish the node labeled $A_1$ from the node labeled $\underline{A}_1$ (Fig. \ref{fig:Dime}). 

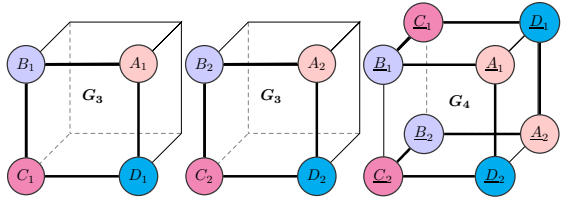
\begin{figure}[!t]
\centering
\begin{subfigure}[b]{0.5\textwidth}
 \resizebox{0.9\textwidth}{!}{\begin{tikzpicture}
\tikzstyle{main}=[circle, minimum size = 3mm, thick, draw =black!80]
  \pic at (1,-3) {annotated cuboid={width=250, height=250, depth=250, scale=.01, units=}};
  \pic at (5,-3) {annotated cuboid={width=250, height=250, depth=250, scale=.01, units=}};
   \pic at (9,-3) {annotated cuboid={width=250, height=250, depth=250, scale=.01, units=}};
   
 \node[main, fill=cyan] (PD)[yshift=-5.5cm, xshift=1cm] {$D_1$};
 \node[main, fill=red!20]  (PA)[above=of PD, yshift=0.65cm] {$A_1$};
 \node[main,fill=magenta!60]  (PC)[left=of PD, xshift=-0.65cm] {$C_1$};
\node[main, fill=blue!20] (PB) [above=of PC, yshift=0.65cm] {$B_1$};
\foreach \from/\to in {PA/PB, PB/PC, PC/PD, PD/PA}
 \draw[ultra thick] (\from) -- (\to); 
 
 \node[main, fill=cyan] (RD)[yshift=-5.5cm, xshift=5cm] {$D_2$};
 \node[main, fill=red!20]  (RA)[above=of RD, yshift=0.65cm] {$A_2$};
 \node[main,fill=magenta!60]  (RC)[left=of RD, xshift=-0.65cm] {$C_2$};
\node[main, fill=blue!20] (RB) [above=of RC, yshift=0.65cm] {$B_2$};
\foreach \from/\to in {RA/RB, RB/RC, RC/RD, RD/RA}
 \draw[ultra thick] (\from) -- (\to);

\node[main, fill=cyan] (PD2)[yshift=-5.5cm, xshift=9cm] {$\underline{D}_2$}; 
\node[main, fill=red!20]  (PA1)[above=of PD2, yshift=0.6cm] {$\underline{A}_1$};
 \node[main,fill=magenta!60]  (PC2)[left=of PD2, xshift=-0.6cm] {$\underline{C}_2$};
 \node[main, fill=blue!20] (PB1) [above=of PC2, yshift=0.6cm] {$\underline{B}_1$};
 \foreach \from/\to in {PA1/PB1,  PC2/PD2, PD2/PA1}
 \draw[ultra thick] (\from) -- (\to); 
 
 \node[main, fill=red!20] (PA2)[yshift=-4.55cm, xshift=10cm] {$\underline{A}_2$}; 
 \node[main, fill=blue!20] (PB2) [left=of PA2, xshift=-0.7cm] {$\underline{B}_2$};
 \node[main, fill=cyan] (PD1) [above=of PA2, yshift=0.6cm] {$\underline{D}_1$};
 \node[main,fill=magenta!60] (PC1) [left=of PD1, xshift=-0.7cm] {$\underline{C}_1$};
 \foreach \from/\to in {  PB1/PC1, PC1/PD1, PB2/PA2, PD1/PA2, PC2/PB2}
 \draw[ultra thick] (\from) -- (\to); 

\node[draw=white!100, above=of PC, yshift=0.1cm, xshift=1.5cm] {\bm{$G_3$}};

\node[draw=white!100, above=of RC, yshift=0.1cm, xshift=1.5cm] {\bm{$G_3$}};

\node[draw=white!100, above=of PB2, yshift=-1cm, xshift=0.8cm] {\bm{$G_4$}};

\end{tikzpicture}}
\end{subfigure}
\caption{{\bf Construction for Proposition \ref{propDime1}}. DimeNet cannot discriminate between \boldmath{$G_4$} and the other graph
that consists of two identical copies of \boldmath${G_3}$, since the corresponding local angles and distances are identical in the two graphs.
Moreover, since $G_3$ and $G_4$ are obtained by overlaying $S_4$ and $S_8$ (from Fig. \ref{fig:prop3}), respectively on a cube, augmenting DimeNet with the port-numbering scheme from $S_4$ and $S_8$ will still not be sufficient to distinguish the graphs. 
\label{fig:Dime}}
\end{figure}

\section{Generalization bounds for GNNs} \label{secGen} 
Next, we study the generalization ability of GNNs via Rademacher bounds, focusing on binary classification. We  generalize the previous results on the complexity of feedforward networks \citep{BFT2017, NBS2018} and RNNs  \citep{CLZ2019} in mainly three ways.  First, we process graphs unlike sequences in RNNs, or instances restricted to the input layer in feedforward networks. Toward that goal, we show the complexity of GNNs that combine predictions from individual nodes may be analyzed by focusing on local node-wise computation trees.  Second, we share weights across all nodes in these computation trees (i.e., both along the depth and the width of the  tree). Third, we model {\em local} permutation-invariance in the aggregate function at each node in the tree. Our bounds are comparable to the Rademacher bounds for RNNs. \\
%

\noindent We consider locally permutation invariant GNNs, where in each layer $\ell$, the embedding $h_v^{\ell} \in \mathbb{R}^r$ of node $v$ of a given input graph is updated by aggregating the embeddings of its neighbors, $u \in \neigh(v)$, via an aggregation function $\rho : \mathbb{R}^r \to \mathbb{R}^r$. Different types of updates are possible; we focus on a {\em mean field} update \citep{DDS2016, JBJ2018, JYBJ2019}:
\begin{equation} \label{SumDecomp}
  h^\ell_v = \phi\big(W_1 x_v + W_2 \rho( \sum\nolimits_{u \in \neigh(v)} g(h^{\ell-1}_u))\big),
\end{equation}
where $\phi$ and $g$ are nonlinear transformations and $x_v \in \mathbb{R}^r$ is the feature vector for $v$. We assume $\rho(0) = 0$,  $||x_v||_2 \leq B_x$ for all $v$,  $||\phi(x)||_{\infty} \leq b < \infty$ for all $x \in \mathbb{R}^r$, $\phi(0) = 0$, $g(0) = 0$. One possible choice of $\phi$ is a squashing function such as tanh. We also assume that $\phi$, $\rho$, and $g$ have Lipschitz constants $C_{\phi}$, $C_{\rho}$, and $C_g$ respectively; and that $W_1$ and $W_2$ have bounded norms: $||W_1||_2 \leq B_1$,  $||W_2||_2 \leq B_2$.  
The weights $W_1, W_2$ and functions $\phi,\rho,g$ are shared across nodes and layers. \\

\noindent The graph label is generated by a readout function that aggregates node embeddings of the final layer $L$. Here, we assume this function applies a local binary classifier of the form
$f_c(h^L_v) = \psi(\beta^\top h^L_v)$ from a family $\mathcal{F}_{\beta}$ parameterized by $\beta$ such that $||\beta||_2 \leq B_{\beta}$, with sigmoid function $\psi$ to each node representation $h^L_v$, and then averages the binary predictions of all nodes, i.e., $f(G) = \sum_{v \in V} f_c(h^L_v)$. We predict label $1$ if $f(G) > 0.5$, else $0$.
Such networks implement permutation invariance locally in each neighborhood, and globally when aggregating the node embeddings. This invariance will play an important role in the analysis.\\

\noindent Let $f(G)$ be the output of the entire GNN for input graph $G$ with true label $y \in \{0, 1\}$. Our loss is a margin loss applied to the difference in probability between true and incorrect label:
\begin{equation*}
  p(f(G),y) = y(2f(G) - 1) + (1-y)(1-2f(G)),
\end{equation*}
with $p(f(G),y) <0$ if and only if there is a classification error.
The margin loss is then, with $a = -p(f(G),y)$ and indicator function $\mathbf{1}[\cdot]$:
\begin{equation}
  \loss_\gamma(a) = \mathbf{1}[a > 0] + (1 + a/\gamma)\mathbf{1}[a \in [-\gamma,0]].
\end{equation}

\noindent A standard result in learning theory relates the population risk $\mathbb{P}[ p(f(G),y) \leq 0]$ to the empirical risk for training examples $\{(G_j,y_j)\}_{j=1}^m$
\begin{equation}
  \hat{\mathcal{R}}_{\gamma}(f) = \dfrac{1}{m} \sum_{j=1}^m \loss_{\gamma}(-p(f(G_j), y_j))
\end{equation}
and the empirical Rademacher complexity $\hat{\mathcal{R}}_S(\mathcal{J}_\gamma)$ of the class $\mathcal{J}_\gamma$ of functions concatenating the loss with the GNN prediction function $f$.
\begin{lemma}[\citet{MRT2012}] \label{MohriLemma}
  For any margin $\gamma > 0$, any prediction function $f$ in a class $\mathcal{F}$ and $\mathcal{J}_\gamma \in \{ (G,y) \mapsto \loss_\gamma(-p(f(G),y)) | f \in \mathcal{F}\}$, given $m$ samples $(G_j,y_j) \sim \mathcal{D}$, with probability $1-\delta$, the population risk for $\mathcal{D}$ and $f$ is bounded as
  \begin{equation*}
    \mathbb{P}(p(f(G), y) \leq 0) \leq \hat{\mathcal{R}}_{\gamma}(f) + 2 \hat{\mathcal{R}}_{\Gs}(\mathcal{J}_{\gamma}) +  3 \sqrt{\tfrac{\log(2/\delta)}{2m}}.
  \end{equation*}\label{lem:gbound}
\end{lemma}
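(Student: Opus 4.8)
The statement is the standard margin-based Rademacher generalization bound, so the plan is to instantiate the generic uniform-convergence machinery on the loss class $\mathcal{J}_\gamma$ and then invoke the fact that the margin loss dominates the $0$--$1$ classification loss. First I would record that every element of $\mathcal{J}_\gamma$ takes values in $[0,1]$: by construction $\loss_\gamma(a) \in [0,1]$ for all $a$ (it equals $1$ for $a>0$, interpolates linearly on $[-\gamma,0]$, and is $0$ below $-\gamma$). This boundedness is exactly what the concentration steps below require.

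For the generic bound I would follow the two-step symmetrization-plus-concentration argument. Consider $\Phi(\Gs) = \sup_{h \in \mathcal{J}_\gamma}\big(\mathbb{E}[h] - \tfrac1m\sum_j h(G_j,y_j)\big)$, the worst-case gap between population and empirical averages over the sample $\Gs=\{(G_j,y_j)\}_{j=1}^m$. Since each $h \in [0,1]$, replacing a single sample changes $\Phi$ by at most $1/m$, so McDiarmid's bounded-differences inequality gives, with probability $1-\delta/2$, that $\Phi(\Gs) \le \mathbb{E}[\Phi(\Gs)] + \sqrt{\log(2/\delta)/(2m)}$. A ghost-sample symmetrization then bounds $\mathbb{E}[\Phi(\Gs)] \le 2\,\mathcal{R}_m(\mathcal{J}_\gamma)$, where $\mathcal{R}_m$ is the expected Rademacher complexity. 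Finally, a second application of McDiarmid (again using the $[0,1]$ range) replaces the expected complexity by the empirical one $\hat{\mathcal{R}}_{\Gs}(\mathcal{J}_\gamma)$ at the cost of another $\sqrt{\log(2/\delta)/(2m)}$ term; a union bound over the two failure events of probability $\delta/2$ accounts for the $\log(2/\delta)$ and the leading constant $3$ on the concentration term.

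It then remains to connect the margin loss to the quantity of interest. Writing $a = -p(f(G),y)$, I would verify the pointwise domination $\mathbf{1}[a \ge 0] \le \loss_\gamma(a)$; in particular $\loss_\gamma(0)=1$ covers the boundary, so $\mathbf{1}[p(f(G),y)\le 0] \le \loss_\gamma(-p(f(G),y))$. Taking expectations gives $\mathbb{P}(p(f(G),y)\le 0) \le \mathbb{E}[\loss_\gamma(-p(f(G),y))]$, and the right-hand side is controlled by the generic bound above. Since the empirical average $\tfrac1m\sum_j \loss_\gamma(-p(f(G_j),y_j))$ is precisely $\hat{\mathcal{R}}_\gamma(f)$ by definition, substituting yields the claimed inequality.

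There is no genuine obstacle here: the result is a direct instantiation of the standard margin bound of \citet{MRT2012}, and the GNN-specific content lives entirely in the subsequent task of bounding $\hat{\mathcal{R}}_{\Gs}(\mathcal{J}_\gamma)$. The only points requiring minor care are the boundedness of $\loss_\gamma$ (needed for both McDiarmid steps) and the behaviour of the loss at the margin boundary $a=0$, which must be handled so that the \emph{closed} event $\{p \le 0\}$ is dominated; both are immediate from the explicit definition of $\loss_\gamma$.
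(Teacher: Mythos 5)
Your proof is correct: the paper does not prove this lemma itself but imports it verbatim as a standard result from \citet{MRT2012}, and your derivation (boundedness of $\loss_\gamma$ in $[0,1]$, McDiarmid plus ghost-sample symmetrization, a second McDiarmid step to pass to the empirical Rademacher complexity yielding the constant $3$, and the pointwise domination $\mathbf{1}[a \geq 0] \leq \loss_\gamma(a)$ handling the closed event at $a=0$) is precisely the standard textbook argument behind that citation. Nothing is missing.
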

\noindent Hence, we need to bound the empirical Rademacher complexity $\hat{\mathcal{R}}_{\Gs}(\mathcal{J}_{\gamma})$ for GNNs. We do this in two steps: (1) we show that it is sufficient to bound the Rademacher complexity of local node-wise computation trees; (2) we bound the complexity for a single tree via recursive spectral bounds, taking into account permutation invariance.


\subsection{From Graphs to Trees}
We begin by relating the Rademacher complexity of $\mathcal{J}_{\gamma}$ to the complexity of each node classification. The node embedding $h^L_v$ is equal to a function applied to the local computation tree of depth $L$, rooted at $v$, that we obtain when unrolling the $L$ neighborhood aggregations. That is, the tree represents the structured $L$-hop neighborhood of $v$, where the children of any node $u$ in the tree are the nodes in $\neigh(u)$. Hence, if $t$ is the tree at $v$, we may write, with a slight abuse of notation, $f_c(h^L_v) = f_c(t;\Theta)$, where $\Theta$ represents the parameters $W_1,W_2$ of the embedding and $\beta$ of the node classifier.\\

\noindent With this notation, we rewrite $f(G;\Theta)$ as an expectation over functions applied to trees. Let $T_1, \ldots, T_n$ be the set of all possible computation trees of depth $L$, and $w_i(G)$ the number of times $T_i$ occurs in $G$. Then, note that we may write  $f(G;\Theta)$ as the sum
\begin{equation}
   \sum_{i=1}^n \underbrace{\dfrac{w_i(G)}{\sum_{\ell=1}^n w_{\ell}(G)}}_{~=~ w'_i(G)} f_c(T_i;\Theta) = \mathbb{E}_{T \sim w'(G)} f_c(T;\Theta).
\end{equation}
This perspective implies a key insight of our analysis: the complexity of the GNN may be bounded by the complexity of the computation trees.
\begin{proposition} \label{Proposition4}
  Let $\mathcal{G} = \{G_1, \ldots, G_m\}$ be a set of i.i.d. graphs, and let $\mathcal{T} = \{t_1, \ldots, t_m\}$ be such that $t_j \sim w'(G_j), j \in [m]$. 
  Denote by $\hat{\mathcal{R}}_{\Gs}$ and $\hat{\mathcal{R}}_{\Ts}$ the empirical Rademacher complexity of GNNs for graphs $\mathcal{G}$ and trees $\Ts$, respectively. 
Then $\hat{\mathcal{R}}_{\mathcal{G}} \leq \mathbb{E}_{t_1, \ldots, t_m} \hat{\mathcal{R}}_{\Ts}.$
\end{proposition}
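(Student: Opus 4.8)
The plan is to leverage the decomposition $f(G_j;\Theta) = \mathbb{E}_{T \sim w'(G_j)} f_c(T;\Theta)$ derived just above, and to pull the tree-sampling expectation out of the supremum over the shared parameters $\Theta$ at the cost of a single inequality. I first write out the empirical Rademacher complexity of the GNN prediction class on the graph sample,
$$\hat{\mathcal{R}}_{\Gs} = \frac{1}{m}\,\mathbb{E}_{\sigma}\Big[\sup_{\Theta} \sum_{j=1}^m \sigma_j\, f(G_j;\Theta)\Big] = \frac{1}{m}\,\mathbb{E}_{\sigma}\Big[\sup_{\Theta} \sum_{j=1}^m \sigma_j\, \mathbb{E}_{T_j \sim w'(G_j)} f_c(T_j;\Theta)\Big],$$
where $\sigma_1,\dots,\sigma_m$ are i.i.d. Rademacher variables and the supremum ranges over the constrained parameter set (with the norm bounds $B_1,B_2,B_\beta$). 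Since $\phi$ and the sigmoid $\psi$ are bounded, $f_c$ is bounded, so all expectations below are finite and the interchanges I use are justified.

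Let $\mathbf{T} = (T_1,\dots,T_m)$ with the $T_j \sim w'(G_j)$ drawn independently, so that each $t_j$ in the statement is a realization of $T_j$. The first move is linearity: because $f_c(T_j;\Theta)$ depends only on the $j$-th coordinate and the $j$-th marginal of $\mathbf{T}$ is $w'(G_j)$, for every fixed $\Theta$ and $\sigma$ I have $\sum_{j}\sigma_j\,\mathbb{E}_{T_j} f_c(T_j;\Theta) = \mathbb{E}_{\mathbf{T}}\big[\sum_{j}\sigma_j f_c(T_j;\Theta)\big]$. The second, and key, move is the elementary inequality $\sup_\Theta \mathbb{E}_{\mathbf{T}}[\,\cdot\,] \le \mathbb{E}_{\mathbf{T}}[\sup_\Theta\,\cdot\,]$ (a supremum of averages is at most the average of the suprema), applied to the argument $\sum_j \sigma_j f_c(T_j;\Theta)$. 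This yields, for each fixed $\sigma$,
$$\sup_{\Theta}\sum_{j=1}^m \sigma_j\,\mathbb{E}_{T_j} f_c(T_j;\Theta) \;\le\; \mathbb{E}_{\mathbf{T}}\Big[\sup_{\Theta}\sum_{j=1}^m \sigma_j\, f_c(T_j;\Theta)\Big].$$

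Finally I take $\mathbb{E}_\sigma$ of both sides and invoke Fubini to swap $\mathbb{E}_\sigma$ and $\mathbb{E}_{\mathbf{T}}$ (legitimate by boundedness), giving $\hat{\mathcal{R}}_{\Gs} \le \mathbb{E}_{\mathbf{T}}\big[\tfrac{1}{m}\mathbb{E}_\sigma[\sup_\Theta \sum_j \sigma_j f_c(T_j;\Theta)]\big] = \mathbb{E}_{t_1,\dots,t_m}\hat{\mathcal{R}}_{\Ts}$, since the inner quantity is exactly the empirical Rademacher complexity of the tree prediction class on the realized sample $(t_1,\dots,t_m)$. The crucial conceptual point — and the only place tightness is lost — is the $\sup\mathbb{E}\le\mathbb{E}\sup$ step; everything else is bookkeeping. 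The one subtlety I would state carefully is that both complexities are taken over the \emph{same} parameter space $\Theta=(W_1,W_2,\beta)$, so that the graph-level predictor and the tree-level predictor $f_c$ are genuinely indexed by the identical family; this is precisely what the weight-sharing assumption guarantees, and it is what makes the interchange a meaningful reduction rather than a vacuous comparison of unrelated classes.
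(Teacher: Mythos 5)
Your proof is correct and follows essentially the same route as the paper: expand $f(G_j;\Theta)$ as $\mathbb{E}_{T\sim w'(G_j)}f_c(T;\Theta)$, apply $\sup_\Theta\mathbb{E}_{\mathbf{T}}[\cdot]\le\mathbb{E}_{\mathbf{T}}[\sup_\Theta\cdot]$ (the paper phrases this as Jensen's inequality for the convex function $\sup$), and then swap $\mathbb{E}_\sigma$ with $\mathbb{E}_{\mathbf{T}}$. Your added remarks on independence of the $T_j$, boundedness justifying Fubini, and the shared parameter space are sound elaborations of the same argument rather than a different approach.
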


\noindent Therefore, to apply Lemma~\ref{lem:gbound}, it is sufficient to bound the Rademacher complexity of classifying single node-wise computation trees. Before addressing this next step in detail, we state and discuss our main result for this section.

\subsection{Generalization Bound for GNNs}
We define the {\em percolation complexity} of our GNNs to be $\mathcal{C} \triangleq C_{\rho}C_{g}C_{\phi}B_{2}$. 
We now bound  $\hat{\mathcal{R}}_{\mathcal{T}}(\mathcal{J}_\gamma)$, when each tree $t_j \in \mathcal{T}$ has a  branching factor (i.e., maximum number of neighbors for any node) at most $d$, and $\mathcal{J}_\gamma$ maps each $(t, y)$ pair to  $\loss_\gamma(-p(f_c(t; \Theta),y))$.   

\begin{proposition} \label{PropGen}
The empirical Rademacher complexity of $\mathcal{J}_{\gamma}$ with respect to $\mathcal{T}$ is
\begin{eqnarray*}
\hat{\mathcal{R}}_{\mathcal{T}}(\mathcal{J}_{\gamma}) \leq \dfrac{4}{\gamma m} + \dfrac{24 r B_{\beta} Z}{\gamma \sqrt{m}} \sqrt{3\log Q}~, ~~ \text{where}
\end{eqnarray*}
$$Q = 24B_{\beta} \sqrt{m} \max\{Z, M \sqrt{r} \max\{B_x B_{1}, \overline{R} B_{2}\}\},$$
$$M = C_{\phi} \dfrac{\left(\mathcal{C} d\right)^L - 1}{\mathcal{C} d - 1}~,~~~ Z = C_{\phi}B_{1}B_x + C_{\phi}B_{2}\overline{R}~,$$
$$\overline{R} \leq C_\rho C_g d \min\left\{b\sqrt{r}, B_{1}B_x M \right\}~. $$
\normalsize

\end{proposition}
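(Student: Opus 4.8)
The plan is to bound $\hat{\mathcal{R}}_{\mathcal{T}}(\mathcal{J}_\gamma)$ by a covering-number argument fed into Dudley's entropy integral, adapting the spectral Rademacher template for feedforward nets \citep{BFT2017} and RNNs \citep{CLZ2019} to the tree-structured, weight-shared, and locally permutation-invariant computation. First I would peel off the loss and the classifier. Since $\loss_\gamma$ is $\tfrac1\gamma$-Lipschitz, the margin map $p(\cdot,y)$ is affine with bounded slope, and the sigmoid $\psi$ is $1$-Lipschitz, Talagrand's contraction lemma reduces the task to bounding the Rademacher complexity of the linear-readout class $\{t \mapsto \beta^\top h^L(t) : \|\beta\|_2 \le B_\beta,\ \|W_1\|_2 \le B_1,\ \|W_2\|_2 \le B_2\}$, up to the overall $1/\gamma$ factor that appears in both terms of the stated bound.

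Next I would obtain the deterministic magnitude bounds by unrolling the mean-field recursion \eqref{SumDecomp}. Using $\phi(0)=g(0)=\rho(0)=0$ together with the Lipschitz constants, the norm $R_\ell = \max_v \|h^\ell_v\|_2$ satisfies $R_\ell \le C_\phi B_1 B_x + C_\phi B_2 C_\rho C_g d\, R_{\ell-1}$, a geometric recursion with ratio $\mathcal{C} d$. Unrolling it bounds the aggregated message $\rho(\sum_{u} g(h^{\ell-1}_u))$ by $\overline{R} \le C_\rho C_g d \min\{b\sqrt r, B_1 B_x M\}$ and the final embedding by $Z = C_\phi B_1 B_x + C_\phi B_2 \overline{R}$, so that $|\beta^\top h^L| \le B_\beta Z$ is the output range. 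Applying the same unrolling to a \emph{perturbation} of the signal rather than the signal itself produces the sensitivity factor $M = C_\phi \frac{(\mathcal{C}d)^L-1}{\mathcal{C}d-1}$, which measures how a change in the shared weights propagates from the leaves to the root of a depth-$L$ tree.

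The crux is the covering number of the tree-embedding class in the empirical $\ell_2$ metric over the $m$ sampled trees. I would cover each of $\beta$, $W_1$, and $W_2$ separately (via a Frobenius/spectral matrix covering bound of Maurey type) and then propagate the three perturbations down the tree. The essential point is that weight sharing, across both depth and width, together with the sum aggregation $\rho(\sum_u g(h_u))$, lets a single perturbed matrix be reused at every node and every layer: each layer amplifies the perturbation only by the local factor $\mathcal{C}$ and the branching $d$, so the total amplification telescopes into $M$ rather than exploding combinatorially with the exponentially many tree nodes. Local permutation invariance is exactly what makes this benign, because the children enter only through $\sum_u g(h_u)$, I never cover an \emph{ordered} $d$-tuple of child maps, only the single map $g$ followed by a sum, yielding a linear-in-$d$ dependence. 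Combining the per-component coverings gives $\log \mathcal{N}(\epsilon) \lesssim \tfrac{(r B_\beta Z)^2}{\epsilon^2}\log Q$, where $Q$ collects all the radius and precision terms and the internal-layer contribution $M\sqrt r \max\{B_x B_1, \overline{R} B_2\}$. Feeding this into Dudley's integral and tuning the truncation scale to balance the $\tfrac{4}{\gamma m}$ tail against the integral yields the stated $\tfrac{4}{\gamma m} + \tfrac{24 r B_\beta Z}{\gamma \sqrt m}\sqrt{3\log Q}$.

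I expect the main obstacle to be this third step: controlling the accumulation of covering error over the exponentially many nodes of the computation tree while expressing everything through the few shared parameters, and ensuring the permutation-invariant sum aggregation contributes no more than a factor of $d$ per layer. Getting the perturbation recursion to close so that it telescopes cleanly into $M$, and verifying that the three matrix coverings compose while preserving the $\sqrt{\log Q}$ form rather than introducing an extra polynomial factor, is where the real care lies.
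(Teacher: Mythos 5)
Your proposal follows essentially the same route as the paper: Talagrand's contraction to strip the loss and margin map, a geometric recursion for the magnitude bounds ($\overline{R}$, $Z$) and a parallel perturbation recursion that telescopes into $M$, separate coverings of $\beta$, $W_1$, $W_2$ whose errors are propagated through the shared-weight tree, and finally Dudley's entropy integral with the truncation scale set to $\alpha = 1/\sqrt{m}$ to produce the $4/(\gamma m)$ term. Your reading of why weight sharing and the sum-form aggregation keep the per-layer amplification at $\mathcal{C}d$ (linear in $d$, no ordered tuples of child maps) matches the paper's Lemmas~\ref{Lemma1}--\ref{Lemma3} exactly.

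One quantitative slip: you state the combined covering number as $\log \mathcal{N}(\epsilon) \lesssim (rB_\beta Z)^2 \epsilon^{-2}\log Q$, a Maurey-type form. That form does not yield the stated bound --- fed into Dudley's integral it gives $\int_\alpha \epsilon^{-1}\,d\epsilon$, hence an extra logarithm and an overall $1/m$ rather than $1/\sqrt{m}$ scaling with different parameter dependence. The covering you actually need (and the one consistent with covering the parameter balls of $\beta$, $W_1$, $W_2$ directly, as the paper does via Lemma~8 of \citet{CLZ2019}) is parametric: $\log P \leq 3r^2\log\bigl(1 + 6B_\beta\max\{Z, M\sqrt{r}\max\{B_xB_1, \overline{R}B_2\}\}/\epsilon\bigr)$, whose square root is $r\sqrt{3\log(\cdot)}$; bounding the integrand by its value at $\epsilon = \alpha$ then gives precisely the $\frac{24 r B_\beta Z}{\gamma\sqrt{m}}\sqrt{3\log Q}$ term. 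With that substitution your argument closes as in the paper.
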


\noindent Note that we do not need to prespecify $B_{1}$ and $B_{2}$: we can simply take these values to be the spectral norm, respectively, of the learned weights $W_1$ and $W_2$. 
Before proceeding with the proof, we discuss some important implications of this result in the wake of Lemma \ref{MohriLemma} and Proposition \ref{Proposition4}.

\paragraph{Comparison with RNN.}
We summarize below the dependence of our generalization error on the various parameters for different $\mathcal{C}$ up to log factors (denoted by notation $\tilde{\mathcal{O}}$).  We also mention the corresponding bounds for RNN on a sequence of length $L$ when the spectral norm of recurrent weights in RNN is respectively less than, equal to, or greater than 1 (note that we renamed some parameters from \citet{CLZ2019} for notational consistency).
\begin{table}[h!]
  \begin{center}
    \label{tab:table1}
    \begin{tabular}{l|c|r} 
      $\mathcal{C}$ &  {\bf GNN (ours)}  & {\bf RNN (\citet{CLZ2019})}   \\
      \hline
      $< 1/d$ & $\tilde{\mathcal{O}}\left(\dfrac{rd}{\sqrt{m}\gamma}\right)$ & $\tilde{\mathcal{O}}\left(\dfrac{r}{\sqrt{m}\gamma}\right)$\\
      $= 1/d$ & $\tilde{\mathcal{O}}\left(\dfrac{rdL}{\sqrt{m}\gamma}\right)$ & $\tilde{\mathcal{O}}\left(\dfrac{rL}{\sqrt{m}\gamma}\right)$\\
      $> 1/d$ & $\tilde{\mathcal{O}}\left(\dfrac{rd\sqrt{rL}}{\sqrt{m}\gamma}\right)$ & $\tilde{\mathcal{O}}\left(\dfrac{r\sqrt{rL}}{\sqrt{m}\gamma}\right)$
    \end{tabular}
  \end{center}
\end{table}
Our analysis implies GNNs have essentially the same dependence on dimension $r$, depth $L$, and sample size $m$ as RNN. The additional dependence on branching factor $d$ is due to processing trees, in contrast to processing sequences in RNNs.
\paragraph{Comparison with VC-bounds for GNNs.}
\rm \citet{STH2018}  proved that the dependence of VC-dimension in their setting for tanh and logistic sigmoid activations is fourth order with respect to the number of hidden units $H$, and quadratic in each of $r$ and maximum number of nodes $N$ in any input graph. Note that $N$ is at least $d$, and possibly much larger than $d$.  Since $H=r$ in our setting, this amounts to having VC-dimension scale as $\mathcal{O}(r^6N^2)$, and consequently, generalization error scale as $\tilde{\mathcal{O}}(r^3N/\sqrt{m})$. Thus, our generalization bounds are significantly tighter even when $L = \mathcal{O}(r)$.

\paragraph{Role of local permutation invariance.}
Note that previous works were confined to dealing with permutation-invariance at a global level (\citet{SI2019, SGSR2017}). 
In contrast, we perform a composition of permutation-invariant transformations, where each transformation pertains to applying a permutation at a level of the input tree. 
We exploit local permutation-invariance via sum-decomposability. 
In the absence of local permutation-invariance, we would need to deal with the orderings at each node separately.

\paragraph{Extension to other variants.} Note that we define $C_g$ and $C_{\rho}$ with respect to the aggregate function (e.g., unweighted sum or mean) that acts prior to transformation by $W_2$.
We can easily extend our analysis to include the edge embeddings.
We considered message passing in the context of so-called {\em mean field embedding} \citep{DDS2016}, however, other updates, such as {\em embedded loopy belief propagation}, may be analyzed similarly in our framework. 


\paragraph{Analysis.} Our proof follows a multistep analysis. We first quantify the effect of change in shared variables on the embedding of the root node of a fixed tree. 
This step entails a recursive decomposition over the effect on subtrees. Specifically, we recursively bound the effect on each subtree of the root by the maximum effect across these subtrees. Since both the non-linear activation function and the permutation-invariant aggregation function are Lipschitz-continuous, and the feature vector at the root and the shared weights have bounded norm, the embedding at the root of the tree adapts to the embeddings from the subtrees. We then quantify the effect of changing not only the shared weights but also changing the classifier parameters. Since the classifier parameters are chosen from a bounded norm family, we can bound the change in prediction probability.
This allows us to use a covering number argument to approximate the predictions, and 
subsequently, bound the empirical Rademacher complexity via Dudley's entropy integral.\\

\noindent Fix the feature vectors for the computation tree of depth $L$ having degree of each internal node equal to $d$. Let the feature vector associated with the root (assumed to be at level $L$) of the tree be given by $x_{L}$.  We denote the feature vector associated with node $j$ at level $\ell \in [L-1] \triangleq \{1, 2, \ldots, L-1\}$ by $x_{\ell, j}$. Denote the embedding produced by the subtree rooted at node $j$ on level $\ell \in [L-1]$ by $T_{\ell, j}(W_a, W_b)$ when $W_a$ and $W_b$ are the parameters of the model. Consider two sets of parameters $\{W_1, W_2\}$ and $\{W_1', W_2'\}$. We will denote the embedding  vector produced by the GNN after processing the entire tree by $T_{L}(W_1, W_2)$ as a shorthand for $T_{L, 1}(W_1, W_2)$. Denote the set of subtrees of node with feature vector $x$ by $C(x)$. We structure the proof as a sequence of several sub-results for improved readability. We first quantify the change in embedding due to change in the shared weight parameters.   

\begin{lemma}\label{Lemma1}
The $l_2$-norm of difference of embedding vectors produced by $(W_1, W_2)$ and $(W_1', W_2')$ 
after they process the tree all the way from the leaf level to the root can be bounded recursively as 
\begin{eqnarray*}
\Delta_L & \triangleq &   \left|\right|T_L(W_1, W_2) - T_L(W_1', W_2')\left|\right|_2\\
& \leq & C_{\phi}B_x\left|\left|(W_1-W_1')\right|\right|_2 \nonumber ~+~ \mathcal{C} d \max_{j \in C(x_{L})} \Delta_{L-1, j} \label{eq4} \\ &+& C_{\phi}||(W_2-W_2') R(W_1, W_2, x_L)||_2 \nonumber~,
\end{eqnarray*}
where
\begin{eqnarray*}
R(W_1, W_2, x_L) = \rho\big(\sum_{j \in C(x_{L})} g(T_{L-1,j} (W_1, W_2))\big)
\end{eqnarray*}
is the permutation-invariant aggregation of the embeddings of the subtrees rooted at level $L-1$ under $(W_1, W_2)$.   \end{lemma}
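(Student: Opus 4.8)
The plan is to unfold the root update once and reduce the bound to a recursion over the $d$ subtrees. Writing $R := R(W_1, W_2, x_L)$ and $R' := R(W_1', W_2', x_L)$ for the permutation-invariant aggregates of the subtree embeddings, the two root embeddings are $T_L(W_1, W_2) = \phi(W_1 x_L + W_2 R)$ and $T_L(W_1', W_2') = \phi(W_1' x_L + W_2' R')$. Since $\phi$ is $C_\phi$-Lipschitz, I would first peel off the outer nonlinearity to get
\[
\Delta_L \leq C_\phi \|(W_1 - W_1')x_L + W_2 R - W_2' R'\|_2.
\]

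The key step is an add-and-subtract decomposition of the $W_2$ term that separates the \emph{direct} effect of perturbing $W_2$ (evaluated at the original aggregate $R$) from the \emph{propagated} effect coming from the change in the subtree embeddings. Inserting $\pm\, W_2' R$ gives $W_2 R - W_2' R' = (W_2 - W_2') R + W_2'(R - R')$. Combining this with $\|(W_1 - W_1')x_L\|_2 \le B_x \|W_1 - W_1'\|_2$ (using $\|x_L\|_2 \le B_x$) and the triangle inequality produces exactly the first and third terms of the claimed bound, together with a residual $C_\phi \|W_2'(R - R')\|_2$ that must collapse into the recursive middle term.

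For that recursive piece I would use $\|W_2'\|_2 \le B_2$ and then expand $R - R'$ through the definition of $R$. Applying the Lipschitz constant $C_\rho$ of $\rho$, then the triangle inequality across the $\le d$ summands, and finally the Lipschitz constant $C_g$ of $g$ gives
\[
\|R - R'\|_2 \le C_\rho C_g \sum_{j \in C(x_L)} \|T_{L-1,j}(W_1, W_2) - T_{L-1,j}(W_1', W_2')\|_2 \le C_\rho C_g\, d \max_{j} \Delta_{L-1,j},
\]
where the last step bounds each of the at most $d$ summands by the maximum. Multiplying by $C_\phi B_2$ recovers the constant $\mathcal{C} = C_\rho C_g C_\phi B_2$ and the factor $d$, completing the bound.

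The argument is a careful bookkeeping exercise rather than a deep one, so the main obstacle is getting the add-and-subtract choice right. One must insert $W_2' R$ (rather than $W_2 R'$) so that the isolated $(W_2 - W_2')$ term is evaluated at the unperturbed aggregate $R = R(W_1, W_2, x_L)$, which is precisely the form stated in the lemma, while the remaining $W_2'(R - R')$ is the piece that recurses into the subtrees. The other point to watch is that the aggregation is a sum over a common index set $C(x_L)$ of size at most $d$, so the triangle inequality cleanly yields the $d \max_j \Delta_{L-1,j}$ factor; this relies on both parameter settings being applied to the \emph{same} fixed tree topology, so only the parameters vary while the branching structure and feature vectors are held constant.
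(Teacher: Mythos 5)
Your proposal is correct and follows essentially the same route as the paper's proof: peel off the outer $C_\phi$-Lipschitz nonlinearity, add and subtract $W_2' R(W_1,W_2,x_L)$ so that the isolated $(W_2-W_2')$ term sits at the unperturbed aggregate, and then push $\|R-R'\|_2$ through the Lipschitz constants $C_\rho$ and $C_g$ and the triangle inequality over the $\le d$ children to obtain the $\mathcal{C}\, d \max_j \Delta_{L-1,j}$ recursion. No gaps.
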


\noindent We therefore proceed to bounding $||R(W_1, W_2, x_L)||_2$.  
\begin{lemma} \label{Lemma2}
\begin{eqnarray*}
& ||R(W_1, W_2, x_L)||_2\\ 
 \leq &C_\rho C_g d \min\left\{b\sqrt{r}, C_{\phi}B_{1}B_x  \dfrac{(\mathcal{C} d)^{L}-1}{\mathcal{C} d-1}\right\}
\end{eqnarray*}
\end{lemma}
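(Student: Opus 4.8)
The plan is to bound $\|R(W_1,W_2,x_L)\|_2$ by peeling off the outer nonlinearities and then controlling the only residual quantity, the maximum norm of a level-$(L-1)$ subtree embedding, in two complementary ways that combine into the $\min$. Since $\rho(0)=0$ and $\rho$ is $C_\rho$-Lipschitz, I would first write $\|R\|_2 = \|\rho(\sum_j g(T_{L-1,j})) - \rho(0)\|_2 \leq C_\rho \|\sum_j g(T_{L-1,j})\|_2$. Applying the triangle inequality over the at most $d$ children, together with $g(0)=0$ and the $C_g$-Lipschitz property of $g$, gives $\|R\|_2 \leq C_\rho C_g d \max_{j} \|T_{L-1,j}\|_2$. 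This reduces the lemma to bounding $N_{L-1} \triangleq \max_j \|T_{L-1,j}\|_2$.

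For the first argument of the $\min$ I would use that every subtree embedding is itself an output of $\phi$, so the sup-norm assumption $\|\phi(x)\|_\infty \leq b$ gives $\|T_{L-1,j}\|_2 \leq \sqrt{r}\, \|T_{L-1,j}\|_\infty \leq b\sqrt{r}$, whence $\|R\|_2 \leq C_\rho C_g d\, b\sqrt{r}$. For the second argument I would set up a recursion on $N_\ell$. Expanding $T_{\ell,j} = \phi(W_1 x_{\ell,j} + W_2 R_{\ell,j})$, using $\phi(0)=0$, $C_\phi$-Lipschitzness, and the norm bounds $\|W_1\|_2\le B_1$, $\|x\|_2\le B_x$, $\|W_2\|_2\le B_2$, and then reusing the peeling argument above at level $\ell$ to control $\|R_{\ell,j}\|_2 \le C_\rho C_g d\, N_{\ell-1}$, I obtain $N_\ell \leq C_\phi B_1 B_x + C_\phi B_2 (C_\rho C_g d)\, N_{\ell-1} = C_\phi B_1 B_x + (\mathcal{C}d)\, N_{\ell-1}$, where the coefficient collapses exactly to the percolation complexity $\mathcal{C} = C_\rho C_g C_\phi B_2$ times $d$. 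The base case is the leaf level, where the aggregation is over an empty neighborhood, so $R = \rho(0) = 0$ and $N_{\text{leaf}} \leq C_\phi B_1 B_x$.

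Solving this affine recurrence produces a geometric series, $N_{L-1} \leq C_\phi B_1 B_x \sum_{i=0}^{L-2}(\mathcal{C}d)^i = C_\phi B_1 B_x \frac{(\mathcal{C}d)^{L-1}-1}{\mathcal{C}d-1} \leq C_\phi B_1 B_x \frac{(\mathcal{C}d)^{L}-1}{\mathcal{C}d-1}$, the last step merely enlarging the geometric sum by one nonnegative term to match the stated form. Plugging this into $\|R\|_2 \leq C_\rho C_g d\, N_{L-1}$ yields the second argument of the $\min$, and taking the smaller of the two estimates establishes the claim. The estimates themselves are routine once the peeling step is isolated; the only point requiring genuine care is the accounting in the recursion, namely tracking the level indexing and the empty-aggregation base case so the geometric sum terminates correctly, and checking that the recurrence coefficient collapses precisely to $\mathcal{C}d$. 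I expect this bookkeeping, rather than any individual inequality, to be the main thing to get exactly right.
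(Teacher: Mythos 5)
Your proposal is correct and follows essentially the same route as the paper: the identical peeling step $\|R\|_2 \leq C_\rho C_g d \max_j \|T_{L-1,j}\|_2$, the same $b\sqrt{r}$ bound from the sup-norm assumption on $\phi$, and the same affine recursion with coefficient $\mathcal{C}d$ unrolled into a geometric series (the paper phrases the recursion on $\|R\|$ at successive levels rather than on the subtree embedding norms $N_\ell$, but these are interchangeable via the same two inequalities). The one-term enlargement of the geometric sum to reach the stated $\frac{(\mathcal{C}d)^L-1}{\mathcal{C}d-1}$ is also consistent with how the paper accounts for levels.
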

\noindent We now quantify the change in probability (that the tree label is 1) $\Lambda_L$ due to change in both the shared weights and the classifier parameters. We prove the following result.
\begin{lemma} \label{Lemma3}
The change in probability $\Lambda_L$ due to change in   parameters from $(W_1, W_2, \beta)$ to $(W_1', W_2', \beta')$ is  
\begin{eqnarray*}
\Lambda_L& = & |\psi(\beta^{\top} T_L(W_1, W_2)) - \psi( {\beta'}^{\top} T_L(W_1', W_2'))| \\
& \leq & ||\beta-\beta'||_2 Z + B_{\beta} \Delta_L~, 
\end{eqnarray*}
where $Z$ is an upper bound on  $||T_L(W_1, W_2)||_2$. Moreover, we can bound $\Delta_L$ non-recursively: 
$$\Delta_L ~~\leq~~  M B_x\left|\left|W_1-W_1'\right|\right|_2$$
$$ ~~+~~ M  ||R(W_1, W_2, x_L)||_2 ||W_2-W_2'||_2~.$$
\end{lemma}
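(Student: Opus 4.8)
The plan is to prove the two displayed inequalities separately: first the probability/classifier bound for $\Lambda_L$, which is a short Lipschitz-plus-cross-term argument, and then the closed-form bound for $\Delta_L$, which comes from unrolling the recursion of Lemma \ref{Lemma1}.

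For the bound on $\Lambda_L$, I would use that the sigmoid $\psi$ is $1$-Lipschitz (its derivative is at most $1/4 \leq 1$), so that $\Lambda_L \leq |\beta^{\top} T_L(W_1, W_2) - {\beta'}^{\top} T_L(W_1', W_2')|$. The key step is to insert the cross term ${\beta'}^{\top} T_L(W_1, W_2)$, rewriting the argument as $(\beta - \beta')^{\top} T_L(W_1, W_2) + {\beta'}^{\top}\big(T_L(W_1, W_2) - T_L(W_1', W_2')\big)$. Applying the triangle inequality and then Cauchy--Schwarz to each summand, and using $||T_L(W_1, W_2)||_2 \leq Z$, that $||\beta'||_2 \leq B_{\beta}$ since $\beta'$ is drawn from the bounded-norm family $\mathcal{F}_{\beta}$, and that $\Delta_L = ||T_L(W_1, W_2) - T_L(W_1', W_2')||_2$ by its definition in Lemma \ref{Lemma1}, yields $\Lambda_L \leq ||\beta - \beta'||_2 Z + B_{\beta}\Delta_L$ immediately.

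For the non-recursive bound on $\Delta_L$, I would unroll the recursion of Lemma \ref{Lemma1} down all $L$ levels of the tree. Each level multiplies the inherited contribution by the factor $\mathcal{C} d$, so the per-level terms accumulate with geometric weights $(\mathcal{C} d)^k$, $k = 0, \ldots, L-1$, giving $\Delta_L \leq \sum_{k=0}^{L-1}(\mathcal{C} d)^k\big(C_{\phi}B_x||W_1-W_1'||_2 + C_{\phi}||W_2-W_2'||_2\,||R(W_1,W_2,x_{L-k})||_2\big)$, where I bounded the $W_2$-term at each level by $C_{\phi}||W_2-W_2'||_2\,||R(W_1,W_2,x_\ell)||_2$ using submultiplicativity of the spectral norm, and used that every feature vector has norm at most $B_x$ for the $W_1$-terms. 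Since $\sum_{k=0}^{L-1}(\mathcal{C} d)^k = \frac{(\mathcal{C} d)^L - 1}{\mathcal{C} d - 1}$ and $M = C_{\phi}\frac{(\mathcal{C} d)^L - 1}{\mathcal{C} d - 1}$, the $W_1$-contributions collapse exactly to $M B_x||W_1-W_1'||_2$.

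The main obstacle is the aggregation term: to factor the $W_2$-contributions into the clean product $M\,||R(W_1,W_2,x_L)||_2\,||W_2-W_2'||_2$, I must dominate every interior aggregation norm $||R(W_1,W_2,x_\ell)||_2$ at the shallower levels $\ell < L$ by the root-level value. I would obtain this from Lemma \ref{Lemma2}, whose bound $C_\rho C_g d \min\{b\sqrt{r},\, C_{\phi}B_1 B_x \frac{(\mathcal{C} d)^\ell - 1}{\mathcal{C} d - 1}\}$ is nondecreasing in the depth $\ell$: the geometric sum grows with $\ell$ whether $\mathcal{C} d$ is above or below $1$, and the outer $\min$ with $b\sqrt{r}$ preserves monotonicity. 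Hence each interior aggregation is bounded by the root-level aggregation, allowing me to pull $||R(W_1,W_2,x_L)||_2$ out of the geometric sum; the remaining factor $C_{\phi}\sum_{k=0}^{L-1}(\mathcal{C} d)^k$ is again $M$, yielding $M\,||R(W_1,W_2,x_L)||_2\,||W_2-W_2'||_2$ and completing the bound.
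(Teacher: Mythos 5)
Your proof takes essentially the same route as the paper's: the $\Lambda_L$ bound via the $1$-Lipschitzness of $\psi$, the cross term ${\beta'}^{\top}T_L(W_1,W_2)$, and Cauchy--Schwarz, and the $\Delta_L$ bound by unrolling the recursion of Lemma \ref{Lemma1} into the geometric sum that defines $M$. You are in fact more explicit than the paper about dominating the interior aggregation norms (the paper just writes ``expanding the recursion'' after substituting $\overline{R}$); the only caveat is that the depth-monotonicity of Lemma \ref{Lemma2}'s estimate dominates those interior norms by that \emph{estimate} evaluated at depth $L$, not by the literal root value $||R(W_1,W_2,x_L)||_2$ --- a harmless discrepancy, since the paper itself only ever uses $\overline{R}$ through its Lemma \ref{Lemma2} upper bound.
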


\noindent Lemma \ref{Lemma3} allows us to ensure that $\Lambda_L$ is small via a sufficiently large covering. Specifically, we show the following.

\begin{lemma} \label{Lemma4}
The change in probability $\Lambda_L$ can be bounded by $\epsilon$ using a covering of size $P$, where $\log P$ is at most 
\begin{eqnarray*}
3r^2\log \left(1 + \dfrac{6B_{\beta} \max\{Z, M \sqrt{r} \max\{B_x B_{1}, \overline{R} B_{2}\}\}}{\epsilon} \right)~. 
\end{eqnarray*}
Moreover, when $$\epsilon < 6B_{\beta} \max\{Z, M \sqrt{r} \max\{B_x B_{1}, \overline{R} B_{2}\}\},$$
a covering of size $P$ such that
$\log P$ is at most
$$3r^2 \log\left(\dfrac{12B_{\beta} \max\{Z, M \sqrt{r} \max\{B_x B_{1}, \overline{R} B_{2}\}\}}{\epsilon}\right)$$
suffices to ensure $\Lambda_L \leq \epsilon$. Here,
$\overline{R} \triangleq ||R(W_1, W_2, x_L)||_2$, and $Z$, $M$ are as defined in the statement of Proposition  \ref{PropGen}.
\end{lemma}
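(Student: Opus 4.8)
The plan is to convert the Lipschitz-type estimate for $\Lambda_L$ from Lemma~\ref{Lemma3} into a covering-number bound by covering each of the three parameter blocks $W_1$, $W_2$, $\beta$ separately and forming a product cover. Combining the two displays of Lemma~\ref{Lemma3} (the bound on $\Lambda_L$ together with the non-recursive bound on $\Delta_L$) gives
\[
\Lambda_L \le Z\,\|\beta-\beta'\|_2 + B_\beta M B_x\,\|W_1-W_1'\|_2 + B_\beta M \overline{R}\,\|W_2-W_2'\|_2 .
\]
Hence it suffices to force each of the three summands below $\epsilon/3$; their sum is then at most $\epsilon$, which is exactly $\Lambda_L \le \epsilon$.

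First I would pick matched per-block resolutions, namely $\eta_\beta = \epsilon/(3Z)$ for $\beta$, $\eta_1 = \epsilon/(3 B_\beta M B_x)$ for $W_1$, and $\eta_2 = \epsilon/(3 B_\beta M \overline{R})$ for $W_2$, so that controlling the relevant parameter differences below these thresholds (Euclidean norm for $\beta$, spectral norm for $W_1, W_2$) controls the corresponding summand. For $\beta$, ranging over the Euclidean ball of radius $B_\beta$ in $\mathbb{R}^r$, the standard volumetric estimate $(1+2R/\eta)^{D}$ with $D=r$ produces a cover of size $(1 + 6B_\beta Z/\epsilon)^r$. For $W_1$ and $W_2$ I would cover in Frobenius norm, which dominates the spectral norm and therefore automatically controls the spectral differences appearing in Lemma~\ref{Lemma3}; since the spectral ball of radius $B_i$ in $\mathbb{R}^{r\times r}$ sits inside the Frobenius ball of radius $B_i\sqrt{r}$, the same volumetric estimate with $D=r^2$ yields covers of sizes $(1 + 6B_\beta M\sqrt{r}\,B_x B_1/\epsilon)^{r^2}$ and $(1 + 6B_\beta M\sqrt{r}\,\overline{R} B_2/\epsilon)^{r^2}$. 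The factor $\sqrt{r}$ that eventually surfaces in the statement is precisely the cost of realizing the spectral constraint through a Frobenius cover.

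Taking the product of the three covers and passing to logarithms, $\log P$ is bounded by $r\log(1+6B_\beta Z/\epsilon) + r^2\log(1+6B_\beta M\sqrt{r}B_x B_1/\epsilon) + r^2\log(1+6B_\beta M\sqrt{r}\,\overline{R} B_2/\epsilon)$. Replacing every logarithm by the one with the largest argument and using $2r^2 + r \le 3r^2$ (valid for $r \ge 1$) collapses this to $3r^2\log\!\big(1 + 6B_\beta\max\{Z, M\sqrt{r}\max\{B_x B_1, \overline{R} B_2\}\}/\epsilon\big)$, the first claimed bound. For the sharper small-$\epsilon$ form I would invoke $1+x \le 2x$ for $x \ge 1$: when $\epsilon < 6B_\beta\max\{Z, M\sqrt{r}\max\{B_x B_1, \overline{R} B_2\}\}$ the argument exceeds $1$, so $1+x$ may be replaced by $2x$, turning the leading $6$ into $12$ and dropping the additive $1$, which is exactly the stated estimate.

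The volumetric counting is routine; the one genuinely delicate point, and the step I would verify most carefully, is the norm bookkeeping for the weight matrices --- ensuring the cover controls the \emph{spectral}-norm differences demanded by Lemma~\ref{Lemma3} while the feasible sets are spectral balls, and tracking that passing through a Frobenius cover is exactly what introduces the $\sqrt{r}$ inside the maximum. I would also confirm that collapsing the three separate logarithms into a single worst-case term does not inflate the exponent past $3r^2$.
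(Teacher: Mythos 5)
Your proposal is correct and follows essentially the same route as the paper: split the bound from Lemma~\ref{Lemma3} into three summands, cover $\beta$ in $\|\cdot\|_2$ and $W_1,W_2$ in $\|\cdot\|_F$ at resolutions $\epsilon/(3Z)$, $\epsilon/(3MB_xB_\beta)$, $\epsilon/(3M\overline{R}B_\beta)$, take the product cover with exponent $2r^2+r\le 3r^2$, and use $1+x\le 2x$ for the small-$\epsilon$ form. The only cosmetic difference is that the paper cites Lemma~8 of \citet{CLZ2019} for the matrix covering numbers, whereas you derive the $\sqrt{r}$ factor directly from the containment of the spectral ball in the Frobenius ball; the resulting bounds are identical.
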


\noindent The remaining steps for Proposition \ref{PropGen} are straightforward and deferred to the Supplementary. We now outline an approach to facilitate future work on understanding the generalization ability of CPNGNNs. 

\subsection{Toward generalization analysis for CPNGNNs}
Two parts were integral to our analysis: {\bf (a)} bounding complexity via local computation trees, and
{\bf (b)} the sum decomposition property of permutation-invariant functions. We now provide their counterparts for CPNGNNs.\\   

\noindent Like before, we start with a vertex $v$, and unroll the $L$ neighborhood aggregations to obtain a computation tree of depth $L$, rooted at $v$. However, now we additionally label each edge in the computation tree with the respective ports of the nodes incident on the edge (enabled by consistent ordering). Thus, we may analyze a input port-numbered graph using its node-wise port-numbered trees. \\    

 \noindent Again, note that permutation-invariance  applies to multisets of messages but
 not port-numbered messages, therefore in CPNGNNs, we cannot express the aggregation as in \eqref{SumDecomp}. 
 To address this issue, we 
now provide an injective function for aggregating a collection of port-numbered messages. The function takes a general sum-form that interestingly decouples the dependence on each message and its corresponding port number. 
\begin{proposition} \label{Injective}
Assume $\mathcal{X}$ is countable. There exists a function $f : \mathcal{X} \times \mathcal{P} \mapsto \mathcal{R}^{n}$ such that\\ \noindent $h((x_1, p_1), \ldots, (x_{|P|}, p_{|P|})) = \sum_{i \in [|P|]} g(p_i) f(x_i)$ for each port-numbered sequence of $(x_i, p_i)$ pairs, where $P \subset \mathbb{N}$, $X = \{x_1, x_2, \ldots, x_{|P|}\} \subset \mathcal{X}$ is a multiset of bounded size, and $p_i$ are all distinct numbers from $[|P|]$.  
\end{proposition}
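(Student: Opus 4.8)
The plan is to exhibit explicit \emph{decoupled} maps $f$ (on messages) and $g$ (on ports) whose weighted sum records, for every port, exactly which message sits at that port, and then read off injectivity coordinate by coordinate. I read the statement as asserting the existence of a message map $f$ and a port map $g$ with $h = \sum_{i} g(p_i) f(x_i)$ injective. Let $N$ be the bound on the multiset size, so every admissible input has $P \subseteq [N]$ with the ports $p_1, \ldots, p_{|P|}$ pairwise distinct. Since $\mathcal{X}$ is countable, fix an injective enumeration $Z : \mathcal{X} \to \mathbb{Z}_{>0}$ and set the scalar map $f(x) = (N+1)^{-Z(x)}$, which is strictly positive and injective, so $0$ never lies in its range (any injection $\mathcal{X}\to(0,\infty)$ would do; this one is concrete and bounded). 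For the ports I take $g : [N] \to \mathbb{R}^N$ to be the one-hot embedding $g(p) = e_p$, the $p$-th standard basis vector, and put $n = N$. Then
\[
h \;=\; \sum_{i \in [|P|]} g(p_i) f(x_i) \;=\; \sum_i f(x_i)\, e_{p_i} \;\in\; \mathbb{R}^N ,
\]
whose $p$-th coordinate equals $f(x)$ when some pair assigns message $x$ to port $p$, and equals $0$ when port $p$ is unused. This is precisely the claimed product form: $g$ depends only on the port and $f$ only on the message.

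Injectivity then follows by decoding each coordinate independently. Because the ports $p_i$ are pairwise distinct, no two summands share a coordinate, so each $h_p$ receives the contribution of at most one pair. Given $h$, for every $p \in [N]$ I inspect $h_p$: if $h_p = 0$ the port is absent, and if $h_p \neq 0$ then injectivity of $f$ recovers the unique $x \in \mathcal{X}$ with $f(x) = h_p$, namely the message at port $p$. Reassembling these recovered pairs reconstructs the entire port-numbered sequence, so $h$ is injective over all inputs of size at most $N$. The fact that $X$ is a \emph{multiset} causes no difficulty: repeated messages land in distinct coordinates because their ports differ.

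Conceptually, this mirrors the Deep Sets / GIN sum-decomposition of permutation-invariant set functions invoked earlier, but the port-numbered setting is more favorable. Since the ports are distinct, I do not need the base-$(N+1)$ multiplicity trick that prevents digit carries in the pure-multiset case; one injective $f$ together with one-hot $g$ already separates all contributions. The one point I would be careful about is keeping the empty-port case unambiguous, which is exactly why $f$ must avoid the value $0$; choosing $f(x) = (N+1)^{-Z(x)} \in (0,1)$ secures this and, as a bonus, keeps the aggregate bounded, which is convenient downstream. The main obstacle — and the reason I prefer the vector-valued construction — arises only if one insists on a scalar ($n=1$) output: there the natural encoding would place $p$ and $Z(x)$ via a pairing function into a base-$(N+1)$ expansion, but a pairing is not additive, so $g(p)f(x)$ no longer \emph{decouples} into a product, and the countably infinite range of $Z$ makes fixed-width digit blocks impossible. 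The one-hot embedding sidesteps both issues, which is why it is the clean route to the decoupled sum-form the proposition asks for.
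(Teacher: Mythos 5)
Your proof is correct, but it takes a genuinely different route from the paper's. The paper keeps the output scalar ($n=1$): it sets $f(x)=k^{-Z(x)}$ with $k=10^{\lceil \log_{10} N\rceil}$ and $g(p)=10^{-kN(p-1)}$, so that each message is written as a decimal expansion and the port number shifts that expansion into a reserved block of $kN$ digits --- exactly the ``pairing via digit blocks'' route you considered and rejected. You instead take $n=N$, let $g(p)=e_p$ be one-hot, and keep $f$ a scalar injection into $(0,1)$, so each port owns a coordinate and injectivity is read off coordinatewise. What each buys: the paper's version gives a one-dimensional aggregate and directly matches its later remark that ``each port appears as an exponent,'' which it uses to discuss why the resulting generalization bound degrades with neighborhood size; your version costs a factor of $N$ in output dimension but makes the decoding argument immediate. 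Notably, the obstacle you flagged for the scalar route is real and applies to the paper's own construction: since $Z:\mathcal{X}\to\mathbb{N}$ is unbounded when $\mathcal{X}$ is countably infinite, the expansion $k^{-Z(x)}$ need not fit inside a block of $kN$ digits, and contributions from different ports can collide (e.g., a single pair at port $1$ with large $Z(x)$ can produce the same real number as a single pair at port $2$ with smaller $Z(x')$), so the paper's $h$ is injective only under an implicit boundedness assumption on $Z$ that its formal proof does not state. Your construction avoids this entirely, so in that respect your argument is the more watertight of the two; the only caveat is that it departs from the scalar sum-form the paper later leans on rhetorically, though the proposition as stated permits $\mathbb{R}^n$-valued outputs and your reading of the (admittedly garbled) type signature of $f$ is the sensible one.
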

\noindent The result in Proposition \ref{Injective} holds particular significance, since it is known \citep{Hella2015} that port-numbered messages provide a strictly richer class than sets and multisets. 
The generalization bound for CPNGNN will be worse than the result in Proposition \ref{PropGen}, since each port appears as an exponent in our generalized decomposition and thus the complexity of aggregation grows rapidly in the neighborhood size.
We leave a detailed analysis of the generalization ability of CPNGNNs for future work.

\bibliographystyle{plainnat}
\bibliography{references.bib}
\appendix
\section{Supplementary material}
We now provide detailed proofs for all our propositions and lemmas.\\ \\

\noindent \textbf{Proof of Proposition \ref{WLvPN}} 
\begin{proof}
We show that CPNGNN, using some consistent port ordering, can distinguish some non-isomorphic graphs that LU-GNNs cannot.  
\begin{figure}[h]
\centering
\resizebox{0.47\textwidth}{!}{\begin{tikzpicture}
\tikzstyle{main}=[circle, minimum size = 5mm, thick, draw =black!80, node distance = 6mm]
\tikzstyle{connect}=[-latex, thick]
  \node[main, fill=blue!20] (PB1) {$B_1$};
  \node[main, fill=cyan] (PC1) [below=of PB1, xshift=-1cm, yshift=0.3cm] {$D_1$}; 
  \node[main, fill=magenta!60] (PC2) [below=of PB1, xshift=1cm, yshift=0.3cm] {$C_1$}; 
  \node[main, fill=blue!20] (PB2) [right=of PB1, xshift=1.6cm] {$B_2$};
  \node[main, fill=cyan] (PC3) [below=of PB2, xshift=-1cm, yshift=0.3cm] {$D_2$}; 
  \node[main, fill=magenta!60] (PC4) [below=of PB2, xshift=1cm, yshift=0.3cm] {$C_2$}; 
  

 \draw [ultra thick] (PB1) -- (PC1) node [near start, yshift=0.2cm, xshift=-0.2cm] {1};
 \draw [ultra thick] (PB1) -- (PC1) node [near end, yshift=0.1cm, xshift=-0.2cm] {1};
 \draw [ultra thick] (PB1) -- (PC2) node [near start, yshift=0.2cm, xshift=0.2cm] {2};
 \draw [ultra thick] (PB1) -- (PC2) node [near end, yshift=0.1cm, xshift=0.2cm] {1};
 \draw [ultra thick] (PC1) -- (PC2) node [very near start, yshift=0.2cm] {2};
 \draw [ultra thick] (PC1) -- (PC2) node [very near end, yshift=0.2cm] {2};
 
 \draw [ultra thick] (PB2) -- (PC3) node [near start, yshift=0.2cm, xshift=-0.2cm] {1};
 \draw [ultra thick] (PB2) -- (PC3) node [near end, yshift=0.1cm, xshift=-0.2cm] {2};
 \draw [ultra thick] (PB2) -- (PC4) node [near start, yshift=0.2cm, xshift=0.2cm] {2};
 \draw [ultra thick] (PB2) -- (PC4) node [near end, yshift=0.1cm, xshift=0.2cm] {2};
 \draw [ultra thick] (PC3) -- (PC4) node [very near start, yshift=0.2cm] {1};
 \draw [ultra thick] (PC3) -- (PC4) node [very near end, yshift=0.2cm] {1};
 
 \node[main, fill=blue!20] (PBa) [right=of PB2,  xshift=1cm] {$\underline{B}_1$};
 \node[main, fill=magenta!60] (PCa) [right=of PBa, xshift=-0.2cm] {$\underline{C}_1$};
 \node[main, fill=cyan] (PCb) [right=of PCa, xshift=-0.2cm] {$\underline{D}_1$};
 \node[main, fill=blue!20] (PBb) [right=of PCb, xshift=-0.2cm] {$\underline{B}_2$};
 \node[main, fill=magenta!60] (PCd) [below=of PBb, yshift=-0.2cm] {$\underline{C}_2$};
 \node[main, fill=cyan] (PCc) [below=of PBa, yshift=-0.2cm] {$\underline{D}_2$};
 
 \node[rectangle,draw=white!100, below=of PCc,yshift=2.4cm, xshift=1.7cm] {\bm{$\underline{G}$}};
 
 \draw [ultra thick] (PBa) -- (PCa) node [very near start, yshift=0.2cm] {2};
 \draw [ultra thick] (PBa) -- (PCa) node [very near end, yshift=0.2cm] {1};
 \draw [ultra thick] (PCa) -- (PCb) node [very near start, yshift=0.2cm] {2};
 \draw [ultra thick] (PCa) -- (PCb) node [very near end, yshift=0.2cm] {2};
 \draw [ultra thick] (PCb) -- (PBb) node [very near start, yshift=0.2cm] {1};
 \draw [ultra thick] (PCb) -- (PBb) node [very near end, yshift=0.2cm] {1};
 \draw [ultra thick] (PBb) -- (PCd) node [very near start, xshift=0.2cm] {2};
 \draw [ultra thick] (PBb) -- (PCd) node [very near end, xshift=0.2cm] {2};
 \draw [ultra thick] (PCc) -- (PCd) node [very near start, yshift=0.2cm] {2};
 \draw [ultra thick] (PCc) -- (PCd) node [very near end, yshift=0.2cm] {1};
 \draw [ultra thick] (PCc) -- (PBa) node [very near start, xshift=-0.2cm] {1};
 \draw [ultra thick] (PCc) -- (PBa) node [very near end, xshift=-0.2cm] {1};
\node[rectangle,draw=white!100, below=of PC3, yshift=0.8cm, xshift=-0.7cm] {\bm{$G$}};
\end{tikzpicture}}
\end{figure}

\noindent We construct a pair of graphs \boldmath{$G$} and  \boldmath{$\underline{G}$} such that \boldmath{$G$} consists of two triangles that differ in port-ordering but are otherwise identical, while \boldmath{$\underline{G}$} (indicated by underlined symbols) consists of a single even-length cycle. The construction ensures that each node labeled with $X \in \{B_1, C_1, D_1, B_2, C_2\}$ in \boldmath{$G$} has the same identical view (i.e., indistinguishable node features, and neighborhood) as the corresponding node labeled $\underline{X}$ in \boldmath{$\underline{G}$}. However, $D_2$ and $\underline{D_2}$ have distinguishable neighborhoods due to different port-numbers: e.g., $D_2$ is connected to $B_2$ at port 2, whereas $\underline{D}_2$ is connected to $\underline{B}_1$ at port 1. Likewise, $D_2$ is connected to $C_2$ at port 1, in contrast to $\underline{D}_2$ that is connected to $\underline{C}_2$ at port 2. However, LU-GNN does not incorporate any spatial information such as ports, and fails to tell one graph from the other.        
\end{proof}

\noindent Note that since $\angle B_1C_1D_1$ differs from $\angle \underline{B}_1\underline{C}_1\underline{D}_1$, DimeNet can also distinguish between the two graphs. \\

\noindent \textbf{Proof of Proposition \ref{PortIssues}} 
\begin{proof}
We now illustrate the importance of choosing a good consistent port numbering. Specifically, we construct a pair of graphs, and two different consistent port numberings $p$ and $q$ such that CPNGNN can distinguish the graphs with $p$ but not $q$.

\begin{figure}[h]
\centering
\resizebox{0.47\textwidth}{!}{\begin{tikzpicture}
\tikzstyle{main}=[circle, minimum size = 5mm, thick, draw =black!80, node distance = 6mm]
\tikzstyle{connect}=[-latex, thick]
  \node[main, fill=blue!20] (PB1) {$B_1$};
  \node[main, fill=cyan] (PC1) [below=of PB1, xshift=-1cm, yshift=0.3cm] {$D_1$}; 
  \node[main, fill=magenta!60] (PC2) [below=of PB1, xshift=1cm, yshift=0.3cm] {$C_1$}; 
  
  \node[main, fill=blue!20] (PB2) [right=of PB1, xshift=1.6cm] {$B_2$};
  \node[main, fill=cyan] (PC3) [below=of PB2, xshift=-1cm, yshift=0.3cm] {$D_2$}; 
  \node[main, fill=magenta!60] (PC4) [below=of PB2, xshift=1cm, yshift=0.3cm] {$C_2$}; 
  

 \draw [ultra thick] (PB1) -- (PC1) node [near start, yshift=0.2cm, xshift=-0.2cm] {1};
 \draw [ultra thick] (PB1) -- (PC1) node [near end, yshift=0.1cm, xshift=-0.2cm] {1};
 \draw [ultra thick] (PB1) -- (PC2) node [near start, yshift=0.2cm, xshift=0.2cm] {2};
 \draw [ultra thick] (PB1) -- (PC2) node [near end, yshift=0.1cm, xshift=0.2cm] {1};
 \draw [ultra thick] (PC1) -- (PC2) node [very near start, yshift=0.2cm] {2};
 \draw [ultra thick] (PC1) -- (PC2) node [very near end, yshift=0.2cm] {2};
 
 \draw [ultra thick] (PB2) -- (PC3) node [near start, yshift=0.2cm, xshift=-0.2cm] {1};
 \draw [ultra thick] (PB2) -- (PC3) node [near end, yshift=0.1cm, xshift=-0.2cm] {1};
 \draw [ultra thick] (PB2) -- (PC4) node [near start, yshift=0.2cm, xshift=0.2cm] {2};
 \draw [ultra thick] (PB2) -- (PC4) node [near end, yshift=0.1cm, xshift=0.2cm] {1};
 \draw [ultra thick] (PC3) -- (PC4) node [very near start, yshift=0.2cm] {2};
 \draw [ultra thick] (PC3) -- (PC4) node [very near end, yshift=0.2cm] {2};
 
 
 \node[main, fill=blue!20] (PBa) [right=of PB2,  xshift=1cm] {$\underline{B}_1$};
 \node[main, fill=magenta!60] (PCa) [right=of PBa, xshift=-0.2cm] {$\underline{C}_1$};
 \node[main, fill=cyan] (PCb) [right=of PCa, xshift=-0.2cm] {$\underline{D}_1$};
 \node[main, fill=blue!20] (PBb) [right=of PCb, xshift=-0.2cm] {$\underline{B}_2$};
 \node[main, fill=magenta!60] (PCd) [below=of PBb, yshift=-0.2cm] {$\underline{C}_2$};
 \node[main, fill=cyan] (PCc) [below=of PBa, yshift=-0.2cm] {$\underline{D}_2$};
 
 \node[rectangle,draw=white!100, below=of PCc,yshift=2.4cm, xshift=1.7cm] {\bm{$\underline{G}$}};
 
 \draw [ultra thick] (PBa) -- (PCa) node [very near start, yshift=0.2cm] {2};
 \draw [ultra thick] (PBa) -- (PCa) node [very near end, yshift=0.2cm] {1};
 \draw [ultra thick] (PCa) -- (PCb) node [very near start, yshift=0.2cm] {2};
 \draw [ultra thick] (PCa) -- (PCb) node [very near end, yshift=0.2cm] {2};
 \draw [ultra thick] (PCb) -- (PBb) node [very near start, yshift=0.2cm] {1};
 \draw [ultra thick] (PCb) -- (PBb) node [very near end, yshift=0.2cm] {1};
 \draw [ultra thick] (PBb) -- (PCd) node [very near start, xshift=0.2cm] {2};
 \draw [ultra thick] (PBb) -- (PCd) node [very near end, xshift=0.2cm] {1};
 \draw [ultra thick] (PCc) -- (PCd) node [very near start, yshift=0.2cm] {2};
 \draw [ultra thick] (PCc) -- (PCd) node [very near end, yshift=0.2cm] {2};
 \draw [ultra thick] (PCc) -- (PBa) node [very near start, xshift=-0.2cm] {1};
 \draw [ultra thick] (PCc) -- (PBa) node [very near end, xshift=-0.2cm] {1};
\node[rectangle,draw=white!100, below=of PC3, yshift=0.8cm, xshift=-0.7cm] {\bm{$G$}};
\end{tikzpicture}}
\end{figure}
\noindent We modify the consistent port numbering from the construction of Proposition \ref{WLvPN}. We consider the same pair of graphs as in the proof of Proposition  \ref{WLvPN}. However, instead of having different numberings for the two components (i.e., triangles) of {\boldmath $G$}, we now carry over the ordering from one component to the other. The two components become identical with this modification. For any node labeled $\underline{X}_1$ or $\underline{X}_2$, and any neighbor labeled $\underline{Y}_1$ or $\underline{Y}_2$, $X, Y \in \{B, C, D\}$,  we can now simply assign the same respective local ports as the nodes labeled $X_1$ and $Y_1$ (or, equivalently, $X_2$ and $Y_2$). It is easy to verify that the two graphs become port-locally isomorphic under the new ordering, and thus cannot be separated with any permutation-invariant readout (using Proposition \ref{Bijection}).   \\       
\end{proof}



\noindent \textbf{Proof of Proposition \ref{Bijection}} 
\begin{proof}
If the surjection $f: V_1 \to V_2$ in Definition \ref{Def1} is also injective, then we can simply take $h=f$. Therefore, we focus on the case when $f$ is not injective.  We will show that $f$ can be used to inform $h$.  Since $f$ is not injective, there exist $v_1, v_1' \in V_1$ such that $v_1 \neq v_1'$ but $f(v_1) = f(v_1') = v_2$ for some $v_2 \in V_2$. Then, by condition ${\textbf (a)}$ in Definition \ref{Def1}, we immediately get that the feature vector
\begin{equation} \label{eqBijection} x_{v_1} = x_{f(v_1)} = x_{f(v_1')} = x_{v_1'}~. \end{equation}
Moreover, by other conditions, there is a consistent port bijection from neighborhood of $v_1$ to that of $v_2$, and likewise another bijection from neighborhood of $v_1'$ to that of $v_2$. Therefore, there is a consistent port bijection from neighborhood of $v_1$ to that of $v_1'$. Together with \eqref{eqBijection} and our assumption that $f(v_1) = f(v_1') = v_2$, this implies that $v_1$ and $v_1'$ are locally {\em indistinguishable}. Note that there could be more such nodes that are indistinguishable from $v_1$ (or $v_1'$), e.g., when all such nodes map to $v_2$ as well. \\

\noindent Without loss of generality, let $\mathcal{E}_1(v_1) \subseteq V_1$ denote the equivalence class of all nodes, including $v_1$, that are indistinguishable from $v_1$ in graph $G_1$. Similarly, let $\mathcal{E}_2(v_2) \subseteq V_2$ be the class of nodes indistinguishable from $v_2$ in $G_2$. Consider $\ell_1 = |\mathcal{E}_1(v_1)|$ and $\ell_2 = |\mathcal{E}_2(v_2)|$. 
We claim that $\ell_1 = \ell_2$. Suppose not. Then if $\ell_1 < \ell_2$, we can have $h$ map each node in $\mathcal{E}_1(v_1)$ to a separate node in $\mathcal{E}_2(v_2)$, and use the same mapping as $f$ on the other nodes in $V_1$. Doing so does not decrease the co-domain of $V_2$, and $h$ remains surjective. We are therefore left with $\ell_2 - \ell_1 > 0$ nodes from $\mathcal{E}_2(v_2)$. Therefore, these nodes must have at least one preimage in the set $V_1 - \mathcal{E}_1(v_1)$ since $f$ (and thus $h$) is a surjection by assumption ${\bf (a)}$ in Definition 1. 
This is clearly a contradiction since any such preimage must have either a different feature vector, or a non-isomorphic port-consistent neighborhood. 
By a symmetric argument, using the surjection of map from $V_2$ to $V_1$, we conclude that $\ell_1 = \ell_2$. Note that $h$ did not tinker with the nodes that were outside the class $\mathcal{E}_1(v_1)$.
Recycling the procedure for other nodes in $V_1 - \mathcal{E}_1(v_1)$ that might map under $f$ to a common image in $V_2$, we note that $h$ ends up being injective. Since $h$ remains surjective throughout the procedure, we conclude that $h$ is a bijection.  \\

\noindent We now prove by induction that
the corresponding nodes in port-locally isomorphic graphs
have identical embeddings
for any CPNGNN. Consider any such GNN with $L+1$ layers parameterized by the sequence $\theta_{1:L+1} \triangleq (\theta_1, \ldots, \theta_L, \theta_{L+1})$. Since there exists a bijection $h$ such that any node $v_1 \in G_1$ has an identical local view (i.e., node features, and port-numbered neighbors) as $v_2 = h(v_1) \in G_1$, the 
updated embeddings for $v_1$ and $v_2$ are identical after the first layer. Assume that these embeddings remain identical after update from each layer $\ell \in \{2, 3, \ldots, L\}$. Since $v_1$ and $v_2$ have identical local views and have identical embedding from the $L${\em th} layer, the updates for these nodes by the $(L+1)${\em th} layer are identical. Therefore, $v_1$ and $v_2$ have identical embeddings. Since $h$ is a bijection, for every $v \in V_1$ there is a corresponding $h(v) \in V_2$ with the same embedding, and thus both $G_1$ and $G_2$ produce the same output with any permutation readout function. Our choice of $\theta_{1:L+1}$ was arbitrary, so the result follows. \\
\end{proof}


\noindent \textbf{Proof of Proposition \ref{prop1}} 
\begin{proof}
We now show that there exist consistent port orderings such that CPNGNNs with permutation-invariant readout cannot decide several important graph properties: girth, circumference, diameter, radius, conjoint cycle, total number of cycles, and $k$-clique. The same result also holds for LU-GNNs where nodes do not have access to any consistent port numbering. \\

 \noindent We first construct a pair of graphs that have cycles of different length but produce the same output embedding via the readout function. Specifically, we show that CPNGNNs cannot decide a graph having cycles of length $n$ from a cycle of length $2n$. We construct a counterexample for $n=4$. Our first graph consists of two cycles of length 4 (each denoted by $S_4$), while the other graph is a cycle of length 8 (denoted by $S_8$). We associate identical feature vectors with nodes that have the same color, or equivalently, that are marked with the same symbol ignoring the subscripts and the underline. For example, $A_1$, $A_2$, $\underline{A}_1$, and $\underline{A}_2$ are all assigned the same feature vector. Moreover, we assign identical edge feature vectors to edges that have the same pair of symbols at the nodes. 

\begin{figure}[h]
    \centering
    \resizebox{0.47\textwidth}{!}{\begin{tikzpicture}
\tikzstyle{main}=[circle, minimum size = 5mm, thick, draw =black!80, node distance = 6mm]
\tikzstyle{connect}=[-latex, thick]
  \node[main, fill=red!20] [xshift=2cm] (PA) {$A_1$};
  \node[main, fill=blue!20] (PB) [right=of PA] {$B_1$};
  \node[main, fill=magenta!60] (PC) [below=of PB] {$C_1$};
 \node[main, fill=cyan] (PD) [below=of PA] {$D_1$};
 
 
 \node[main, fill=red!20] [right=of PB] (PA2) {$A_2$};
  \node[main, fill=blue!20] (PB2) [right=of PA2] {$B_2$};
  \node[main, fill=magenta!60] (PC2) [below=of PB2] {$C_2$};
 \node[main, fill=cyan] (PD2) [below=of PA2] {$D_2$};

 \node[main, fill=red!20] (L1) [right=of PB2, xshift=1cm] {$\underline{A}_1$};
  \node[main, fill=blue!20] (L2) [right=of L1] {$\underline{B}_1$};
  \node[main, fill=magenta!60] (L3) [right=of L2] {$\underline{C}_1$};
  \node[main, fill=cyan] (Lt) [right=of L3] {$\underline{D}_1$};
  \node[main,fill=cyan] (O1) [below=of L1] {$\underline{D}_2$};
  \node[main,fill=magenta!60] (O2) [right=of O1,below=of L2] {$\underline{C}_2$};
  \node[main,fill=blue!20] (O3) [right=of O2,below=of L3] {$\underline{B}_2$};
  \node[main,fill=red!20] (Ot) [right=of O3,below=of Lt] {$\underline{A}_2$};

\draw [ultra thick] (PA) -- (PB) node [very near start, yshift=0.2cm] {1};
 \draw [ultra thick] (PA) -- (PB) node [very near end, yshift=0.2cm] {1}; 
 
 \draw [ultra thick] (PB) -- (PC) node [very near start, xshift=0.2cm] {2};
 \draw [ultra thick] (PB) -- (PC) node [very near end, xshift=0.2cm, yshift=0.05cm] {2}; 
 \draw [ultra thick] (PC) -- (PD) node [very near start, yshift=0.2cm] {1};
 \draw [ultra thick] (PC) -- (PD) node [very near end, yshift=0.2cm] {1}; 
 
 \draw [ultra thick] (PD) -- (PA) node [very near start, xshift=-0.2cm, yshift=1pt] {2};
 \draw [ultra thick] (PD) -- (PA) node [very near end, xshift=-0.2cm] {2};
 
 \draw [ultra thick] (PA2) -- (PB2) node [very near start, yshift=0.2cm] {1};
 \draw [ultra thick] (PA2) -- (PB2) node [very near end, yshift=0.2cm] {1}; 
 
 \draw [ultra thick] (PB2) -- (PC2) node [very near start, xshift=0.2cm] {2};
 \draw [ultra thick] (PB2) -- (PC2) node [very near end, xshift=0.2cm, yshift=0.05cm] {2}; 
 \draw [ultra thick] (PC2) -- (PD2) node [very near start, yshift=0.2cm] {1};
 \draw [ultra thick] (PC2) -- (PD2) node [very near end, yshift=0.2cm] {1}; 
 
 \draw [ultra thick] (PD2) -- (PA2) node [very near start, xshift=-0.2cm, yshift=1pt] {2};
 \draw [ultra thick] (PD2) -- (PA2) node [very near end, xshift=-0.2cm] {2};
 
 \draw [ultra thick] (L1) -- (L2) node [very near start, yshift=0.2cm] {1};
 \draw [ultra thick] (L1) -- (L2) node [very near end, yshift=0.2cm] {1};
 
 \draw [ultra thick] (L2) -- (L3) node [very near start, yshift=0.2cm] {2};
 \draw [ultra thick] (L2) -- (L3) node [very near end, yshift=0.2cm] {2};
 
 \draw [ultra thick] (L3) -- (Lt) node [very near start, yshift=0.2cm] {1};
 \draw [ultra thick] (L3) -- (Lt) node [very near end, yshift=0.2cm] {1};
 
 \draw [ultra thick] (O1) -- (O2) node [very near start, yshift=0.2cm] {1};
 \draw [ultra thick] (O1) -- (O2) node [very near end, yshift=0.2cm] {1};
 
 \draw [ultra thick] (O2) -- (O3) node [very near start, yshift=0.2cm] {2};
 \draw [ultra thick] (O2) -- (O3) node [very near end, yshift=0.2cm] {2};
 
 \draw [ultra thick] (O3) -- (Ot) node [very near start, yshift=0.2cm] {1};
 \draw [ultra thick] (O3) -- (Ot) node [very near end, yshift=0.2cm] {1};
 
 \draw [ultra thick] (O1) -- (L1) node [very near start, xshift=-0.2cm, yshift=1pt] {2};
 \draw [ultra thick] (O1) -- (L1) node [very near end, xshift=-0.2cm] {2};
 
 \draw [ultra thick] (Ot) -- (Lt) node [very near start, xshift=0.2cm, yshift=1pt] {2};
 \draw [ultra thick] (Ot) -- (Lt) node [very near end, xshift=0.2cm] {2};

        \foreach \from/\to in {L1/O1, L1/L2, L2/L3, L3/Lt, Lt/Ot, O1/O2, O2/O3, O3/Ot, PA/PB, PB/PC, PC/PD, PD/PA,
        PA2/PB2, PB2/PC2, PC2/PD2, PD2/PA2}
 \draw [ultra thick] (\from) -- (\to); 
         \node[rectangle,draw=white!100, below=of PA, yshift=1cm, xshift=0.7cm] {\bm{$S_4$}};
         \node[rectangle,draw=white!100, below=of PA2, yshift=1cm, xshift=0.7cm] {\bm {$S_4$}};
\node[rectangle,draw=white!100, below=of L1, yshift=1cm, xshift=2cm] {\bm{$S_8$}};
    
\end{tikzpicture}}
\end{figure}

\noindent Thus, we note that a bijection exists between the two graphs with node $X$ in the first graph corresponding to $\underline{X}$ in the second graph such that both the nodes have identical features and indistinguishable port-ordered neighborhoods. Since, the two graphs have different girth, circumference, diameter, radius, and total number of cycles, it follows from Proposition \ref{Bijection} that CPNGNN cannot decide these properties. Note that the graph with two $S_4$ cycles is disconnected, and hence its radius (and diameter) is $\infty$. 
\begin{figure}[h]
\centering
\resizebox{0.47\textwidth}{!}{\begin{tikzpicture}
\tikzstyle{main}=[circle, minimum size = 5mm, thick, draw =black!80, node distance = 7mm]
\tikzstyle{connect}=[-latex, thick]
  \node[main, fill=red!20] [xshift=2cm, yshift=1cm] (PA) {$A_1$};
  \node[main, fill=blue!20] (PB) [right=of PA, yshift=-1cm] {$B_1$};
  \node[main, fill=magenta!60] (PC) [left=of PB, yshift=-1cm] {$C_1$};
 \node[main, fill=cyan] (PD) [left=of PA, yshift=-1cm] {$D_1$};
 
 \node[main, fill=red!20] [right=of PA, xshift=2.7cm] (PA2) {$A_2$};
  \node[main, fill=blue!20] (PB2) [right=of PA2, yshift=-1cm] {$B_2$};
  \node[main, fill=magenta!60] (PC2) [left=of PB2, yshift=-1cm] {$C_2$};
 \node[main, fill=cyan] (PD2) [left=of PA2, yshift=-1cm] {$D_2$};

\draw [ultra thick] (PA) -- (PB) node [very near start, xshift=0.2cm, yshift=1pt] {1};
 \draw [ultra thick] (PA) -- (PB) node [very near end, yshift=0.2cm] {1};
 
 \draw [ultra thick] (PA2) -- (PB2) node [very near start, xshift=0.2cm, yshift=1pt] {1};
 \draw [ultra thick] (PA2) -- (PB2) node [very near end, yshift=0.2cm] {1};
 
 \draw [ultra thick] (PB) -- (PC) node [very near start, xshift=-0.2cm, yshift=2pt] {2};
 \draw [ultra thick] (PB) -- (PC) node [very near end, yshift=0.2cm] {2};
 
 \draw [ultra thick] (PB2) -- (PC2) node [very near start, xshift=-0.2cm, yshift=2pt] {2};
 \draw [ultra thick] (PB2) -- (PC2) node [very near end, yshift=0.2cm] {2};
 
 \draw [ultra thick] (PC) -- (PD) node [very near start, xshift=0.1cm, yshift=4pt] {1};
 \draw [ultra thick] (PC) -- (PD) node [very near end, yshift=0.1cm, xshift=0.1cm] {1};
 
 \draw [ultra thick] (PC2) -- (PD2) node [very near start, xshift=0.1cm, yshift=4pt] {1};
 \draw [ultra thick] (PC2) -- (PD2) node [very near end, yshift=0.1cm, xshift=0.1cm] {1};

\draw [ultra thick] (PD) -- (PA) node [very near start, xshift=-0.1cm, yshift=5pt] {2};
 \draw [ultra thick] (PD) -- (PA) node [very near end, xshift=-0.2cm, yshift=0.1cm] {2};
 
 \draw [ultra thick] (PD2) -- (PA2) node [very near start, xshift=-0.1cm, yshift=5pt] {2};
 \draw [ultra thick] (PD2) -- (PA2) node [very near end, xshift=-0.2cm, yshift=0.1cm] {2};
 
 \draw [ultra thick] (PD) -- (PB) node [very near start, xshift=-0.1cm, yshift=5pt] {3};
 \draw [ultra thick] (PD) -- (PB) node [very near end, xshift=-0.05cm, yshift=0.2cm] {3};
 
  \draw [ultra thick] (PD2) -- (PB2) node [very near start, xshift=-0.1cm, yshift=5pt] {3};
 \draw [ultra thick] (PD2) -- (PB2) node [very near end, xshift=-0.05cm, yshift=0.2cm] {3};

   
   \node[main, fill=red!20] [xshift=6cm, yshift=0.5cm, left=of PA2] (RA) {$\underline{A}_1$};
  \node[main, fill=blue!20] (RB) [right=of RA, yshift=-1cm] {$\underline{B}_1$};
  \node[main, fill=magenta!60] (RC) [left=of RB] {$\underline{C}_1$};
 \node[main, fill=cyan] (RD) [left=of RA, yshift=-1cm] {$\underline{D}_1$};

\draw [ultra thick] (RA) -- (RB) node [very near start, xshift=0.2cm, yshift=1pt] {1};
 \draw [ultra thick] (RA) -- (RB) node [very near end, yshift=0.2cm] {1};
 
 \draw [ultra thick] (RC) -- (RD) node [very near start, xshift=-0.1cm, yshift=0.2cm] {1};
 \draw [ultra thick] (RC) -- (RD) node [near end, yshift=0.2cm] {1};
 
 \draw [ultra thick] (RB) -- (RC) node [very near start, xshift=-0.1cm, yshift=0.2cm] {2};
 \draw [ultra thick] (RB) -- (RC) node [very near end, yshift=0.2cm] {2};
 
 \draw [ultra thick] (RD) -- (RA) node [very near start, xshift=-0.1cm, yshift=0.2cm] {2};
 \draw [ultra thick] (RD) -- (RA) node [very near end, yshift=0.2cm, xshift=-0.1cm] {2};

 \foreach \from/\to in {RA/RB, RB/RC, RC/RD, RD/RA}
 \draw [ultra thick] (\from) -- (\to); 
  
 \node[rectangle,draw=white!100, below=of PA, yshift=0.9cm] {\bm{$G_1$}};
 \node[rectangle,draw=white!100, below=of PA2, yshift=0.9cm] {\bm{$G_1$}};

  \node[main, fill=red!20] [below=of RC, yshift=0.5cm] (BA) {$\underline{A}_2$};
  \node[main, fill=cyan] (BD) [right=of BA, yshift=-1cm] {$\underline{D}_2$};
  \node[main, fill=magenta!60] (BC) [left=of BD] {$\underline{C}_2$};
 \node[main, fill=blue!20] (BB) [left=of BA, yshift=-1cm] {$\underline{B}_2$};

\draw [ultra thick] (BA) -- (BD) node [very near start, xshift=0.2cm, yshift=1pt] {1};
 \draw [ultra thick] (BA) -- (BD) node [very near end, yshift=0.2cm] {1};
 
 \draw [ultra thick] (BC) -- (BB) node [very near start, xshift=-0.1cm, yshift=0.2cm] {1};
 \draw [ultra thick] (BC) -- (BB) node [near end, yshift=0.2cm] {1};
 
 \draw [ultra thick] (BD) -- (BC) node [very near start, xshift=-0.1cm, yshift=0.2cm] {2};
 \draw [ultra thick] (BD) -- (BC) node [very near end, yshift=0.2cm] {2};
 
 \draw [ultra thick] (BB) -- (BA) node [very near start, xshift=-0.1cm, yshift=0.2cm] {2};
 \draw [ultra thick] (BB) -- (BA) node [very near end, yshift=0.2cm, xshift=-0.1cm] {2};
 
 \draw [ultra thick] (BB) -- (RD) node [very near start, xshift=-0.2cm] {3};
 \draw [ultra thick] (BB) -- (RD) node [very near end, yshift=-0.1cm, xshift=-0.2cm] {3};
 
 \draw [ultra thick] (BD) -- (RB) node [very near start, xshift=-0.2cm] {3};
 \draw [ultra thick] (BD) -- (RB) node [very near end, yshift=-0.1cm, xshift=-0.2cm] {3};

\node[rectangle,draw=white!100, below=of RD, yshift=1cm, xshift=0.8cm] {\bm{$G_2$}};
  

\end{tikzpicture}}
\end{figure}

\noindent We craft a separate construction for 
the remaining properties, namely, $k$-clique and conjoint cycle.
The main idea is to replicate the effect of the common edge in the conjoint cycle via two identical components of another graph (that does not have any conjoint cycle) such that the components are cleverly aligned to reproduce the local port-ordered neighborhoods and thus present the same view to each node (see the adjoining figure).
Specifically, each conjoint cycle is denoted by $G_1$, and the other graph that does not have any conjoint cycles by $G_2$. The graphs, being port-locally isomorphic, are indistinguishable by CPNGNN.\\

\noindent For the $k$-clique, we simply connect 
$A_1$ to $C_1$, $A_2$ to $C_2$, $\underline{A}_1$ to $\underline{C}_1$, and $\underline{A}_2$ to $\underline{C}_2$ via a new port 3 at each of these nodes. Doing so ensures that the new graphs are port-locally isomorphic as well. Adding these edges, we note that, unlike $G_2$, each conjoint cycle $G_1$ yields a $4$-clique.   
\end{proof}

\noindent \textbf{Proof of Proposition \ref{propDime1}}
 \begin{proof}
 We now demonstrate the representational limits of DimeNets. Specifically, we show two graphs that differ in several graph properties such as girth, circumference, diameter,  radius, or total number of cycles. However, these graphs cannot be distinguished by DimeNets.  \\ 
 
\noindent Note that DimeNet will be able to discriminate $S_8$ from the graph with cycles $S_4$ (recall our construction in Proposition \ref{prop1}), since, e.g., $\angle B_1C_1D_1$ in $S_4$ is different from $\angle \underline{B}_1\underline{C}_1\underline{D}_1$ in $S_8$. In order to design a failure case for DimeNet, we need to construct a pair of non-isomorphic graphs that
have not only identical local pairwise distances but also angles, so that their output embedding is same.

\begin{figure}[h]
\centering
\resizebox{0.47\textwidth}{!}{\begin{tikzpicture}
\tikzstyle{main}=[circle, minimum size = 3mm, thick, draw =black!80]
  \pic at (1,-3) {annotated cuboid={width=250, height=250, depth=250, scale=.01, units=}};
  \pic at (5,-3) {annotated cuboid={width=250, height=250, depth=250, scale=.01, units=}};
   \pic at (9,-3) {annotated cuboid={width=250, height=250, depth=250, scale=.01, units=}};
   
 \node[main, fill=cyan] (PD)[yshift=-5.5cm, xshift=1cm] {$D_1$};
 \node[main, fill=red!20]  (PA)[above=of PD, yshift=0.65cm] {$A_1$};
 \node[main,fill=magenta!60]  (PC)[left=of PD, xshift=-0.65cm] {$C_1$};
\node[main, fill=blue!20] (PB) [above=of PC, yshift=0.65cm] {$B_1$};
\foreach \from/\to in {PA/PB, PB/PC, PC/PD, PD/PA}
 \draw[ultra thick] (\from) -- (\to); 
 
 \node[main, fill=cyan] (RD)[yshift=-5.5cm, xshift=5cm] {$D_2$};
 \node[main, fill=red!20]  (RA)[above=of RD, yshift=0.65cm] {$A_2$};
 \node[main,fill=magenta!60]  (RC)[left=of RD, xshift=-0.65cm] {$C_2$};
\node[main, fill=blue!20] (RB) [above=of RC, yshift=0.65cm] {$B_2$};
\foreach \from/\to in {RA/RB, RB/RC, RC/RD, RD/RA}
 \draw[ultra thick] (\from) -- (\to);

\node[main, fill=cyan] (PD2)[yshift=-5.5cm, xshift=9cm] {$\underline{D}_2$}; 
\node[main, fill=red!20]  (PA1)[above=of PD2, yshift=0.6cm] {$\underline{A}_1$};
 \node[main,fill=magenta!60]  (PC2)[left=of PD2, xshift=-0.6cm] {$\underline{C}_2$};
 \node[main, fill=blue!20] (PB1) [above=of PC2, yshift=0.6cm] {$\underline{B}_1$};
 \foreach \from/\to in {PA1/PB1,  PC2/PD2, PD2/PA1}
 \draw[ultra thick] (\from) -- (\to); 
 
 \node[main, fill=red!20] (PA2)[yshift=-4.55cm, xshift=10cm] {$\underline{A}_2$}; 
 \node[main, fill=blue!20] (PB2) [left=of PA2, xshift=-0.7cm] {$\underline{B}_2$};
 \node[main, fill=cyan] (PD1) [above=of PA2, yshift=0.6cm] {$\underline{D}_1$};
 \node[main,fill=magenta!60] (PC1) [left=of PD1, xshift=-0.7cm] {$\underline{C}_1$};
 \foreach \from/\to in {  PB1/PC1, PC1/PD1, PB2/PA2, PD1/PA2, PC2/PB2}
 \draw[ultra thick] (\from) -- (\to); 

\node[draw=white!100, above=of PC, yshift=0.1cm, xshift=1.5cm] {\bm{$G_3$}};

\node[draw=white!100, above=of RC, yshift=0.1cm, xshift=1.5cm] {\bm{$G_3$}};

\node[draw=white!100, above=of PB2, yshift=-1cm, xshift=0.8cm] {\bm{$G_4$}};

\end{tikzpicture}}
\end{figure}

\noindent Our idea is to overlay the cycles $S_4$ and $S_8$ on a cube (see $G_3$ and $G_4$ - the graphs consist of only edges in bold). Doing so does not have any bearing on the graph properties.
Since we orient the edges of these cycles along the sides of the cube, the local distances are identical. Moreover, by having $A_1B_1C_1D_1$ and $A_2B_2C_2D_2$ as opposite faces of the cube, we ensure that each angle in $G_4$ is a right angle, exactly as in $G_3$. Thus, for each $X \in \{A, B, C, D\}$, nodes $X_1$, $X_2$, $\underline{X}_1$, and $\underline{X}_2$  have identical feature vectors and identical local spatial information. Thus, the embeddings for $X_1$, $X_2$, $\underline{X}_1$, and $\underline{X}_2$ are identical, and any permutation-invariant readout results in identical output embeddings for the two graphs. \\
\end{proof}

\noindent \textbf{Proof of Proposition \ref{Proposition4}}
\begin{proof}
We now show that the complexity of the GNN may be bounded by the complexity of the computation trees. In other words, the worst case generalization bound over a set of graphs corresponds to having each graph be a single computation tree.\\

\noindent Formally,
\begin{eqnarray*} \hat{\mathcal{R}}_\mathcal{G} & \triangleq &  \mathbb{E}_{\sigma} \sup_{\Theta} \sum_{j=1}^m \sigma_j f(G_j;\Theta)\\
 &  = & \mathbb{E}_{\sigma} \sup_{\Theta}  \sum_{j=1}^m \sigma_j \mathbb{E}_{T \sim w'(G_j)} f_c(T;\Theta) \\
 &  \leq & \mathbb{E}_{\sigma} \mathbb{E}_{t_1,..t_m} \sup_{\Theta}  \sum_{j=1}^m \sigma_j f_c(t_j;\Theta)~\\
 & = & \mathbb{E}_{t_1,..t_m} \underbrace{\mathbb{E}_{\sigma} \sup_{\Theta}  \sum_{j=1}^m \sigma_j f_c(t_j;\Theta)}_{\hat{\mathcal{R}}_{\mathcal{T}}}~,
\end{eqnarray*}
where we invoked Jensen's inequality to swap the expectation with supremum for our inequality (the operation is permissible since $\sup$ is a convex function). 
\end{proof}
\noindent \textbf{Proof of Lemma \ref{Lemma1}}
\begin{proof}
Our objective here is to bound the 
effect of change in weights from $(W_1, W_2)$ to $(W_1', W_2')$ on the embedding of the root node of our fixed tree (that has depth $L$). Since non-linear activation and permutation-invariant aggregation are both Lipschitz-continuous functions, and the feature vector at the root $x_L$  and the weights have bounded norm, the embedding at the root of the tree adapts to the embeddings from the subtrees. \\

\noindent Specifically, we note that the $l_2$-norm of difference of embedding vectors produced by $(W_1, W_2)$ and $(W_1', W_2')$ is 
\small
\begin{eqnarray}
\Delta_L & \triangleq &\left|\right|T_L(W_1, W_2) - T_L(W_1', W_2')||_2 \nonumber \\
&=&\bigg|\bigg|\phi\big(W_1 x_L + W_2\underbrace{ \rho\big(\sum_{j \in C(x_{L})} g(T_{L-1,j} (W_1, W_2)\big)}_{\triangleq ~R(W_1, W_2, x_L)}\big)\big) \nonumber \\
&-& \phi\big(W_1' x_L + W_2' \rho\big(\sum_{j \in C(x_{L})} g(T_{L-1,j} (W_1', W_2')\big)\big)\big)
\bigg|\bigg|_2  \nonumber \\
&\leq&C_{\phi}\left|\left|(W_1-W_1')x_L\right|\right|_2 \label{eq1} \\ & \quad+ & C_{\phi}||W_2 R(W_1, W_2, x_L) - W_2' R(W_1', W_2', x_L)||_2 \nonumber~.
\end{eqnarray}
\normalsize
Therefore, in order to find an upper bound for $\Delta_L$, we will bound the two terms in the last inequality separately. We first bound the second term using the sum of $||W_2 R(W_1, W_2, x_L) - W_2'R(W_1, W_2, x_L)||_2$ and $||W_2'R(W_1, W_2, x_L) - W_2'R(W_1', W_2', x_L)||_2$.
Note that
\begin{eqnarray}
&& ||W_2'R(W_1, W_2, x_L) - W_2'R(W_1', W_2', x_L)||_2 \nonumber \\
&\leq& ||W_2'||_2 ~~||R(W_1, W_2, x_L) - R(W_1', W_2', x_L)||_2~.  \nonumber\\ \label{eq2}
\end{eqnarray}
Since $g$ is $C_g$-Lipschitz, the branching factor of tree is $d$, and $\rho$ is $C_{\rho}$-Lipschitz, therefore, $R$ is $dC_gC_{\rho}$-Lipschitz.  We will use this fact to bound \eqref{eq2}. Specifically,
\begin{eqnarray*} &&\hspace*{-1cm} ||R(W_1, W_2, x_L) - R(W_1', W_2', x_L)||_2\\
&\leq& C_{\rho} \bigg|\bigg|\sum_{j \in C(x_{L})} \bigg( g(T_{L-1,j} (W_1, W_2))\\ 
&&\hspace*{2cm}- g(T_{L-1,j} (W_1', W_2')) \bigg)\bigg|\bigg|_2\\
&\leq& C_{\rho} \sum_{j \in C(x_{L})}  \bigg|\bigg|\bigg( g(T_{L-1,j} (W_1, W_2))\\ 
&&\hspace*{2cm}- g(T_{L-1,j} (W_1', W_2')) \bigg)\bigg|\bigg|_2\\
&\leq& C_{\rho} C_g \sum_{j \in C(x_{L})}  \bigg|\bigg|T_{L-1,j} (W_1, W_2)
- T_{L-1,j} (W_1', W_2')\bigg|\bigg|_2\\ 
&=& C_{\rho} C_g \sum_{j \in C(x_{L})} \Delta_{L-1, j}~.
\end{eqnarray*}
Using this with $||W_2'||_2 \leq B_{2}$ in \eqref{eq2}, we immediately get 
\begin{eqnarray*} 
||W_2'R(W_1, W_2, x_L)&-&W_2'R(W_1', W_2', x_L)||_2 ~\nonumber\\
 &\leq& B_{2}C_{\rho} C_g \sum_{j \in C(x_{L})} \Delta_{L-1, j}~\nonumber \\
 &\leq& B_{2}C_{\rho} C_g d \max_{j \in C(x_{L})} \Delta_{L-1, j}~. \label{eq3}
\end{eqnarray*}

\noindent In other words, we bound the effect on each subtree of the root by the maximum effect across these subtrees. Combining this with $||x_L||_2 \leq B_x$, we note from \eqref{eq1} that
\begin{eqnarray}
\Delta_L &\leq&  C_{\phi}B_x\left|\left|(W_1-W_1')\right|\right|_2 \nonumber \\&~~~+& C_{\phi}B_{2}C_{\rho} C_g d \max_{j \in C(x_{L})} \Delta_{L-1, j} \label{eq4} \\ &+& C_{\phi}||(W_2-W_2') R(W_1, W_2, x_L)||_2 \nonumber~. \end{eqnarray} \end{proof}

\noindent \textbf{Proof of Lemma \ref{Lemma2}}
\begin{proof}
Note from \eqref{eq4} that in order for the change in embedding of the root (due to a small change in weights) to be small, we require that the last term in  \eqref{eq4} is small.  Toward, that goal we bound the norm of permutation-invariant aggregation at the root node. Specifically, we note that \\
\begin{eqnarray}
&& ||R(W_1, W_2, x_L)||_2 \nonumber \\ & = & \bigg|\bigg|\rho\big(\sum_{j \in C(x_{L})} g(T_{L-1,j} (W_1, W_2)\big)\big)\bigg|\bigg|_2 \nonumber\\
& = & \bigg|\bigg|\rho\big(\sum_{j \in C(x_{L})} g(T_{L-1,j} (W_1, W_2)\big)\big) - \rho(0) \bigg|\bigg|_2 \nonumber \\
&\leq& C_{\rho} \bigg|\bigg| \sum_{j \in C(x_{L})} g(T_{L-1,j} (W_1, W_2))\bigg|\bigg|_2 \nonumber \\
&\leq& C_{\rho} \sum_{j \in C(x_{L})} \bigg|\bigg| g(T_{L-1,j} (W_1, W_2)) - g(0)\bigg|\bigg|_2 \nonumber \\
&\leq& C_{\rho}C_g \sum_{j \in C(x_{L})} \bigg| \bigg| T_{L-1,j} (W_1, W_2) \bigg| \bigg|_2~. \nonumber \\
&\leq& C_{\rho}C_g d \max_{j \in C(x_{L})} \bigg| \bigg| T_{L-1,j} (W_1, W_2) \bigg| \bigg|_2~,
\label{eq5}
\end{eqnarray}
where the norm of the embedding produced by children $j$ of the root using weights $W_1$ and $W_2$ is given by\\
$\bigg| \bigg| T_{L-1,j} (W_1, W_2) \bigg| \bigg|_2$
\begin{eqnarray}
&=& \bigg|\bigg|\phi(W_1x_{L-1, j} + W_2 R(W_1, W_2, x_{L-1, j}))\bigg|\bigg|_2 \nonumber \\
&=& \bigg|\bigg|\phi(W_1x_{L-1, j} + W_2 R(W_1, W_2, x_{L-1, j})) - \phi(0)\bigg|\bigg|_2 \nonumber \\
&\leq& C_{\phi} \bigg|\bigg|W_1x_{L-1, j} + W_2 R(W_1, W_2, x_{L-1, j})\bigg|\bigg|_2 \nonumber \\
&\leq& C_{\phi}\bigg|\bigg|W_1x_{L-1, j}\bigg|\bigg|_2 + C_{\phi}\bigg|\bigg|W_2 R(W_1, W_2, x_{L-1, j})\bigg|\bigg|_2 \nonumber \\
&\leq& C_{\phi}B_{1}B_x + C_{\phi}B_{2} \bigg|\bigg| R(W_1, W_2, x_{L-1, j})\bigg|\bigg|_2~. \label{eq6}
\end{eqnarray}
Also, since $||\phi(x)||_{\infty} \leq b$ for all $x \in \mathbb{R}^r$ (by our assumption), and $||\phi(x)||_2 \leq \sqrt{r} ||\phi(x)||_{\infty}$, we obtain 
\begin{equation} \label{eq7}
\bigg| \bigg| T_{L-1,j} (W_1, W_2) \bigg| \bigg|_2 ~\leq~ b \sqrt{r}~.
\end{equation}
Combining \eqref{eq5} and \eqref{eq6}, we get the recursive relationship
\begin{eqnarray}
&&||R(W_1, W_2, x_L)||_2 \nonumber \\
&\leq& C_{\rho}C_{g}C_{\phi}B_{1}B_x d \nonumber \\
&+& C_{\rho}C_{g}C_{\phi}B_{2}d ~\max_{j \in C(x_L)} \bigg|\bigg| R(W_1, W_2, x_{L-1, j})\bigg|\bigg|_2~ \nonumber \\
&\leq& C_{\rho}C_{g}C_{\phi}B_{1}B_x d \sum_{\ell=0}^{L-1} (C_{\rho}C_{g}C_{\phi}B_{2} d)^{\ell} \nonumber \\
&=& C_{\rho}C_{g}C_{\phi}B_{1}B_x d~  \dfrac{(\mathcal{C} d)^{L}-1}{\mathcal{C} d-1}~. \label{eq8}
\end{eqnarray}
On the other hand, combining \eqref{eq5} and \eqref{eq6}, we get
\begin{equation} \label{eq9}
||R(W_1, W_2, x_L)||_2 ~\leq~ bdC_{\rho}C_g\sqrt{r}~.
\end{equation}
Taken together, \eqref{eq8} and \eqref{eq9} yield $||R(W_1, W_2, x_L)||_2$
\begin{equation} \label{eq10}
 ~\leq~ C_\rho C_g d \min\left\{b\sqrt{r}, C_{\phi}B_{1}B_x  \dfrac{(\mathcal{C} d)^{L}-1}{\mathcal{C} d-1}\right\}~.  
\end{equation}
\end{proof}

\noindent \textbf{Proof of Lemma \ref{Lemma3}}
\begin{proof}
Using the results from Lemma \ref{Lemma1} and \ref{Lemma2}, we will simplify the bound on $\Delta_L$, i.e., the change in embedding due to a change in weights. We will then bound the change in probability (that the tree label is 1) $\Lambda_L$ in terms of $\Delta_L$, when we change not only the weights from $(W_1, W_2)$ to $W_1', W_2'$ but also the local classifier parameters from $\beta$ to $\beta'$ (where $\beta$ and $\beta'$ are chosen from a bounded norm family). We show these steps below. \\

\noindent Plugging the bound on 
$\overline{R} \triangleq ||R(W_1, W_2, x_L)||_2$ from Lemma \ref{Lemma2} in Lemma \ref{Lemma1}, we get
\begin{eqnarray*}
\Delta_L &\leq&  C_{\phi}B_x\left|\left|W_1-W_1'\right|\right|_2 \nonumber \\&~~~+& \mathcal{C} d \max_{j \in C(x_{L})} \Delta_{L-1, j} \\ &+& C_{\phi}||W_2-W_2'||_2 \overline{R} \nonumber~. 
\end{eqnarray*}
Expanding the recursion, we note that 
\begin{equation} \label{eq11}
 \Delta_L \leq  M B_x\left|\left|W_1-W_1'\right|\right|_2 + M  \overline{R} ||W_2-W_2'||_2~,
\end{equation}
where 
\begin{eqnarray} M = C_{\phi} \dfrac{\left(\mathcal{C} d\right)^L - 1}{\mathcal{C} d - 1}~. 
\end{eqnarray}
Since $||A||_2 \leq ||A||_F$ for every matrix $A$, we have
\begin{equation} \label{eq13}
 \Delta_L \leq  M B_x\left|\left|W_1-W_1'\right|\right|_F + M  \overline{R} ||W_2-W_2'||_F~.
\end{equation}

Now since sigmoid is $1$-Lipschitz, we have 
\begin{eqnarray}
\Lambda_L &=& |\psi(\beta^{\top} T_L(W_1, W_2)) - \psi( {\beta'}^{\top} T_L(W_1', W_2'))| \nonumber \\
&\leq& |\beta^{\top} T_L(W_1, W_2) - {\beta'}^{\top} T_L(W_1, W_2)| \nonumber \\
&+& |{\beta'}^{\top} T_L(W_1, W_2) - {\beta'}^{\top} T_L(W_1', W_2')| \nonumber \\
&\leq& ||\beta-\beta'||_2 ~||T_L(W_1, W_2)||_2 + B_{\beta} \Delta_L \nonumber\\
&\leq& ||\beta-\beta'||_2 \underbrace{(C_{\phi}B_{1}B_x + C_{\phi}B_{2}\overline{R})}_{Z} + B_{\beta} \Delta_L~ \nonumber\\ \label{eq14}
\end{eqnarray}
using  \eqref{eq6} and \eqref{eq10}.
\end{proof}

\noindent \textbf{Proof of Lemma \ref{Lemma4}}
\begin{proof}
Building on results from Lemmas \ref{Lemma1}-\ref{Lemma3}, we will now show that the change in probability $\Lambda_L$ can be bounded by $\epsilon$, using a covering of size $P$, where $P$ depends on $\epsilon$. Moreover, we show that $\log P$ grows as $\mathcal{O}\left(\log \left(\dfrac{1}{\epsilon}\right)\right)$ for sufficiently small values of $\epsilon$. That is, we can ensure $\Lambda_L$ is small by using a small covering. 

\noindent We begin by noting that we can find a covering $\mathcal{C}\left(\beta, \dfrac{\epsilon}{3Z_{\ell}}, ||\cdot||_2\right)$ of size 
\begin{equation*} \mathcal{N}\left(\beta, \dfrac{\epsilon}{3Z_{\ell}}, ||\cdot||_2\right)  ~\leq~ \left(1 + \dfrac{6Z B_{\beta}}{\epsilon}    \right)^{r}~.
\end{equation*}
Thus, for any specified $\epsilon$, we can ensure that $\Lambda_L$ is at most $\epsilon$ by finding matrix coverings $\mathcal{C}\left(W_1, \dfrac{\epsilon}{3M B_x B_{\beta}}, ||\cdot||_F\right)$ and 
$\mathcal{C}\left(W_2, \dfrac{\epsilon}{3M \overline{R} B_{\beta}}, ||\cdot||_F\right)$. Using  Lemma 8 from \citep{CLZ2019}, we obtain the corresponding bounds on their covering number. Specifically,
\small
\begin{equation*} \mathcal{N}\left( W_1, \dfrac{\epsilon}{3M B_x B_{\beta}}, ||\cdot||_F \right)  ~\leq~ \left(1 + \dfrac{6M B_x B_{\beta} B_{1}\sqrt{r}}{\epsilon}    \right)^{r^2},
\end{equation*}
\begin{equation*} \mathcal{N}\left(W_2, \dfrac{\epsilon}{3M \overline{R}B_{\beta}}, ||\cdot||_F \right)  ~\leq~ \left(1 + \dfrac{6M \overline{R} B_{\beta} B_{2}\sqrt{r}}{\epsilon} \right)^{r^2}.
\end{equation*}
\normalsize
The product of all the covering numbers is bounded by
\small
\begin{eqnarray*}
P = \left(1 + \dfrac{6B_{\beta} \max\{Z, M \sqrt{r} \max\{B_x B_{1}, \overline{R} B_{2}\}\}}{\epsilon} \right)^{2r^2+r}~. 
\end{eqnarray*}
\normalsize
Therefore, the class $\mathcal{B} (L, d, r, \beta, B_1, B_2, B_x)$ that maps a tree-structured input to the probability that the corresponding tree label is $1$ 
can be approximated to within $\epsilon$ by a covering of size $P$. Moreover, when 
$$\epsilon < 6B_{\beta} \max\{Z, M \sqrt{r} \max\{B_x B_{1}, \overline{R} B_{2}\}\},$$ we obtain that $\log P$ is at most
$$3r^2 \log\left(\dfrac{12B_{\beta} \max\{Z, M \sqrt{r} \max\{B_x B_{1}, \overline{R} B_{2}\}\}}{\epsilon}\right)~.$$
\end{proof}

\noindent \textbf{Proof of Proposition \ref{PropGen}}
\begin{proof}
We are now ready to prove our  generalization bound.  Specifically, we  invoke a specific form of Dudley's entropy integral to bound the empirical Rademacher complexity $\hat{\mathcal{R}}_{\mathcal{T}}(\mathcal{J}_\gamma)$ via our result on covering from Lemma \ref{Lemma4}, where recall that $\mathcal{J}_\gamma$ maps each tree-label pair $(t, y)$  to margin loss $\loss_\gamma(-p(f_c(t; \Theta),y))$.\\ 


\noindent It is straightforward to show that $p$ is 2-Lipschitz in its first argument, and $loss_{\gamma}$ is $\dfrac{1}{\gamma}$-Lipschitz.  Therefore, we can approximate  the class $\mathcal{I}$ that maps $(t, y)$ to 
$p(f_c(t; \Theta), y)$
by finding an $\dfrac{\epsilon}{2}$-cover of $\mathcal{B}$. 
Now, note that $\mathcal{I}$ takes values in the interval 
$[-e, e]$, where 
\begin{eqnarray*}
e  =  ||u||_2~||T_L(W_1, W_2)||_2  ~\leq~  B_{\beta} Z~.
\end{eqnarray*}
Using Lemma A.5. in \citep{BFT2017}, we obtain that
\begin{eqnarray*}
\hat{\mathcal{R}}_\mathcal{T}(\mathcal{I}) \leq \inf_{\alpha > 0} \left(\dfrac{4 \alpha}{\sqrt{m}} + \dfrac{12}{m} \int_{\alpha}^{2 e \sqrt{m}} \sqrt{\log  \mathcal{N}(\mathcal{I}, \epsilon, ||\cdot||)} d \epsilon  \right)
\end{eqnarray*}
where, using Lemma \ref{Lemma4}, we have
\begin{eqnarray*}
\int_{\alpha}^{2 e \sqrt{m}} \sqrt{\log  \mathcal{N}(\mathcal{I}, \epsilon, ||\cdot||)} d \epsilon  \\  \leq \int_{\alpha}^{2 e \sqrt{m}} \sqrt{\log  \mathcal{N}(\mathcal{B}, \dfrac{\epsilon}{2}, dist(\cdot, \cdot))} d \epsilon 
\\ \leq \int_{\alpha}^{2 e \sqrt{m}} \sqrt{\log U} ~~~ \leq 2e \sqrt{m} \sqrt{\log U} ~~~ = 2 B_{\beta} Z \sqrt{m \log U}
\end{eqnarray*}
with $dist$ being the combination of $||\cdot||_2$ and $||\cdot||_F$ norms used to obtain covering of size $P$ in Lemma \ref{Lemma4}, and $\log U$ is 
$$3r^2 \log \left(\dfrac{24B_{\beta} \max\{Z, M \sqrt{r} \max\{B_x B_{1}, \overline{R} B_{2}\}\}}{\alpha}\right)~.$$
Setting $\alpha = \sqrt{\dfrac{1}{m}}$, we immediately get
$$\hat{\mathcal{R}}_{\mathcal{T}}(\mathcal{I}) \leq \dfrac{4}{m} + \dfrac{24 B_{\beta} Z}{\sqrt{m}} \sqrt{3r^2 \log Q}~, $$
where $$Q = 24B_{\beta} \sqrt{m} \max\{Z, M \sqrt{r} \max\{B_x B_{1}, \overline{R} B_{2}\}\}~.$$

\noindent We finally bound the complexity of class $\hat{\mathcal{R}}_{\mathcal{T}}(\mathcal{J}_{\gamma})$ by noting that $loss_{\gamma}$ is $\dfrac{1}{\gamma}$-Lipschitz, and invoking Talagrand's lemma \citep{MRT2012}:

\begin{eqnarray*}
\hat{\mathcal{R}}_{\mathcal{T}}(\mathcal{J}_{\gamma}) \leq  \dfrac{\hat{\mathcal{R}}_{\mathcal{T}}(\mathcal{I})}{\gamma} \leq \dfrac{4}{\gamma m} + \dfrac{24 r B_{\beta} Z }{\gamma \sqrt{m}} \sqrt{3\log Q}~. 
\end{eqnarray*}
\end{proof}
\noindent \textbf{Proof of Proposition \ref{Injective}} 
\begin{proof}
We first convey some intuition. Suppose $|X| < 8$, and we assign a distinct index $z(x) \in \{1, 2, \ldots, 8\}$ to each message $x \in X$. Then, we can map each $x$ to $10^{-z(x)}$, i.e., obtain a decimal expansion which may be viewed as a one-hot vector representation of at most 10 digits. We would reserve a separate block of 10 digits for each port.
This would allow us to disentangle the coupling between messages and their corresponding ports. 
Specifically, since the ports are all distinct, we can {\em shift} the digits in expansion of $x$ to the right by dividing by $10^{p}$, where $p$ is the port number of $x$. This allows us to represent each $(x, p)$ pair uniquely. \\

\noindent Formally, since $\mathcal{X}$ is countable, there exists a mapping $Z : \mathcal{X} \mapsto \mathbb{N}$ from $x \in \mathcal{X}$ to natural numbers. Since $X$ has bounded cardinality, we know the existence of some $N \in \mathbb{N}$ such that $|X| < N$ for all $X$. Define $k = 10^{\lceil \log_{10} N \rceil}$. We define function $f$ in the proposition as $f(x) = k^{-Z(x)}$. We also take function $g$ in proposition to be $g(p) = 10^{-kN(p-1)}$. That is, we express the function $h$ as $h((x_1, p_1), \ldots, (x_{|P|}, p_{|P|})) = \sum_{i=1}^{|P|} g(p_i) f(x_i)$.  
\end{proof}

\end{document}